\newtheorem{problem*}{Problem}
\newtheorem{theorem}{Theorem}
\newtheorem{lemma}{Lemma}
\newtheorem{definition}{Definition}
\newtheorem{corollary}{Corollary}
\newtheorem*{example*}{Example}
\newcommand{\nosemic}{\renewcommand{\@endalgocfline}{\relax}}
\newcommand{\dosemic}{\renewcommand{\@endalgocfline}{\algocf@endline}}
\let\oldnl\nl
\newcommand{\nonl}{\renewcommand{\nl}{\let\nl\oldnl}}
 \newcommand{\cD}{\mathcal{D}}
\newcommand{\cM}{\mathcal{M}} 
\newcommand{\cR}{\mathcal{R}}
\newcommand{\cK}{\mathcal{K}}
\newcommand{\norm}[1]{\left\lVert#1\right\rVert}
\newcommand{\bias}[1]{\operatorname{Bias}\left[#1\right]}
\newcommand{\pr}[1]{\operatorname{Pr}\left(#1\right)}
\newcommand{\var}[1]{\operatorname{Var}\left(#1\right)}
\newcommand{\EE}[2]{\operatorname{\mathbb{E}}_{#1}\left[#2\right]}
\newcommand{\inner}[2]{\left\langle#1,#2\right\rangle}
\newcommand{\err}[1]{\operatorname{Err}\left(#1\right)}
\newcommand{\BB}[2]{B_{#1}\left(#2\right)}
\newcommand{\truedata}{\bm{x}}
\newcommand{\noisydata}{\tilde{\bm{x}}}
\newcommand{\postdata}{\hat{\bm{x}}}
\newcommand{\nnpostdata}{\hat{\bm{x}}_{+}}
\newcommand{\sumpostdata}{\hat{\bm{x}}_S}
\newcommand{\sumpostdatai}{\hat{\bm{x}}_{Si}}
\newcommand{\region}{\cK}
\newcommand{\nnregion}{\cK}
\newcommand{\sumregion}{\cK}
\newcommand{\nnprogram}{P_{+}}
\newcommand{\radius}{r_m}
\newcommand{\bound}{C'}
\newcommand{\unit}{\bm{e}}
\newcommand{\newunit}{\bm{e}'}
\newcommand{\refopt}{\operatorname{Ref}}
\newcommand{\noise}{\bm{\eta}}
\newcommand{\RR}{\mathbb{R}}
\newcommand{\NN}{\mathbb{N}}
\newcommand{\oset}[3][0ex]{%
  \mathrel{\mathop{#3}\limits^{
    \vbox to#1{\kern-2\ex@
    \hbox{$\scriptstyle#2$}\vss}}}}
\title{Bias and Variance of Post-processing in Differential Privacy}
\author{%
  Keyu Zhu\\
  Georgia Institute of Technology \\
  \texttt{kzhu67@gatech.edu}
  \And
  Pascal Van Hentenryck\\
  Georgia Institute of Technology\\
  \texttt{pvh@isye.gatech.edu}
  \And 
  Ferdinando Fioretto \\
  Syracuse University \\
  \texttt{ffiorett@syr.edu} 
}
\begin{document}
\maketitle\sloppy\allowdisplaybreaks

\begin{abstract}
Post-processing immunity is a fundamental property of differential
privacy: it enables the application of arbitrary data-independent 
transformations to the results of differentially private outputs 
without affecting their privacy guarantees. 
When query outputs must satisfy domain constraints, post-processing 
can be used to project the privacy-preserving outputs onto the feasible region. 
Moreover, when the feasible region is convex, a widely adopted class of post-processing steps is also guaranteed to improve accuracy. Post-processing has 
been applied successfully in many applications including census 
data-release, energy systems, and mobility. However, its effects on the 
noise distribution is poorly understood: It is often argued that 
post-processing may introduce bias and increase variance. This paper 
takes a first step towards understanding the properties of 
post-processing. It considers the release of census data and 
examines, both theoretically and empirically, the behavior of a 
widely adopted class of post-processing functions.
\end{abstract}

\section{Introduction}
\label{sec:Introduction}
Data sets and statistics about groups of individuals are increasingly collected and released, feeding many optimization and learning algorithms. 
In many cases, the released data contain sensitive information whose privacy is strictly regulated. For example, in the U.S., the census data is regulated under Title 13 \cite{Title13}, which requires that no individual be identified from any data release by the Census Bureau. In Europe, data release are regulated according to the General Data Protection Regulation \cite{GDPR}, which addresses the control and transfer of personal data. Hence statistical agencies release privacy-preserving data and statistics that conform to these requirements.

\emph{Differential Privacy} \cite{Dwork:06} is of particular interest to meet this goal. Differential privacy is a formal privacy definition that bounds the disclosure risk of any individual participating in a computation. It is  considered the de-facto standard for privacy protection and has been 
adopted by various corporations \cite{erlingsson2014rappor,apple}  
and governmental agencies \cite{abowd2018us}.

On data-release tasks, differentially private algorithms, typically, inject carefully calibrated noise to the data before release. However, whereas this process guarantees
privacy, it also affects the fidelity  of the released data. In
particular, the injected noise often produces data sets that violate
consistency constraints of the application domain. For example, in census statistics, the number of people satisfying a property must be consistent in a geographical hierarchy, e.g., at the national, state, and county levels. The injection of independent noise, however, cannot ensure the consistency of these constraints.

To overcome this limitation, the differentially private outputs can be \emph{post-processed} via data-independent functions that 
transform the noisy data to render it consistent with the domain constraints.
The post-processing step is guaranteed to retain differential privacy. Moreover, when the feasible region is convex, a largely adopted class of post-processing functions, called \emph{projections}, is guaranteed to improve accuracy. Post-processing has been applied successfully in many applications, including census data \cite{abowd2018us}, energy systems \cite{fioretto:TSG20}, and mobility \cite{xi:15}. However, the effect of post-processing on the noise distribution is poorly understood: It is often argued that it may introduce bias and/or increase variance. Figure \ref{fig:bias} illustrates this aspect on a census data-release problem, described later in the paper. It depicts the distribution of the Laplacian residual $\tilde{\bm{x}} - \bm{x}$, where $\bm{x}$ denotes the true data and $\tilde{\bm{x}}$ the noisy data, obtained by applying Laplacian noise to $\bm{x}$, as well as the post-processed residual $\hat{\bm{x}} - \bm{x}$, where $\hat{\bm{x}}$ is the projection of $\tilde{\bm{x}}$ onto the feasible region. The results are shown for two counties, and, as can be seen, the post-processing introduces significant bias on their associated privacy-preserving data.

\begin{figure}
\centering
\includegraphics[width=0.9\linewidth]{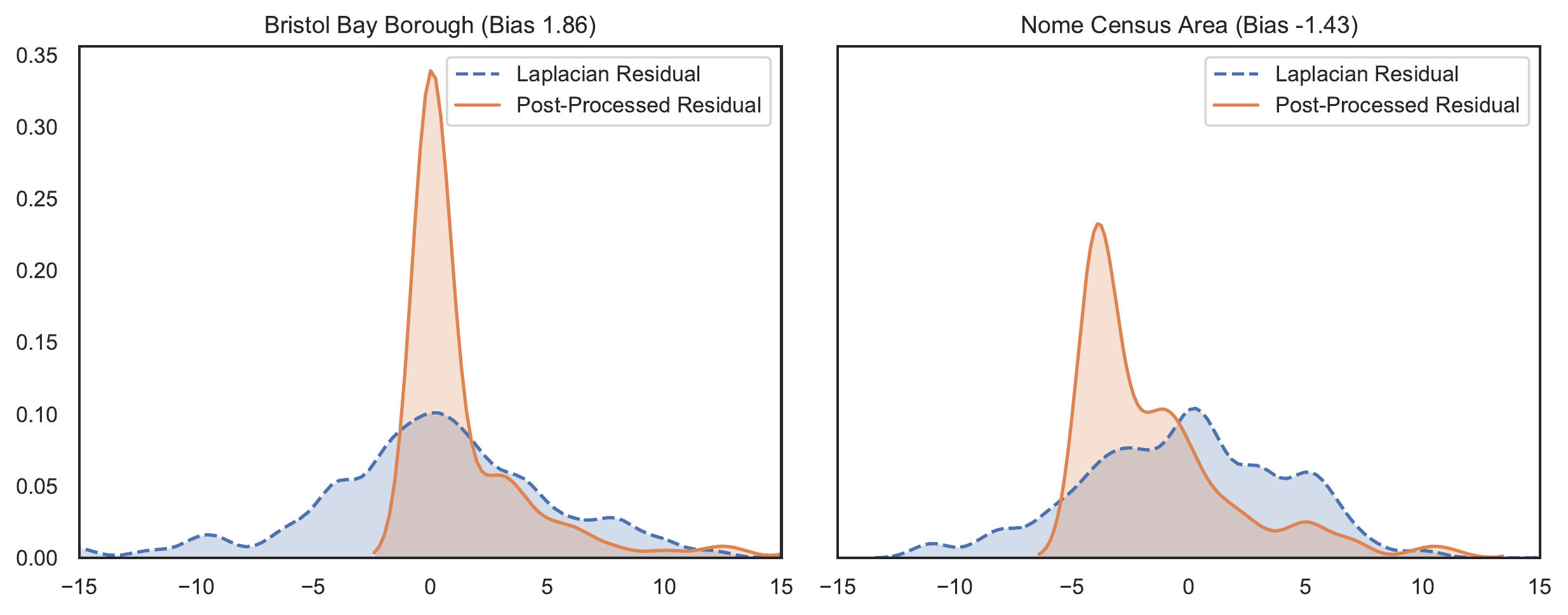}
\caption{Bias of Post-Processing on the Census Problem.}
\label{fig:bias}
\vspace{-0.5cm}
\end{figure}

The key contribution of this paper is to take a first step towards understanding the properties of post-processing. Motivated by census applications, it studies the behavior of two widely adopted classes of post-processing functions, called \emph{projections}, for domains where the feasibility space is specified by linear equations. The two classes differ by the presence of non-negativity constraints. The paper shows that, when non-negativity constraints are absent, the projection does not introduce bias. When projections include non-negativity constraints, the paper presents an upper bound on the bias, which provides some insights on the type of problems for which the bias will be significant. 
Finally, the paper provides a detailed analysis of an important sub-problem used to satisfy hierarchical constraints in data-release tasks: It fully characterizes the residual distribution of the post-processed data, shows that it converges towards the Laplace distribution, and shed some interesting light on the effect of projections on the variance of the post-processed data, which may have strong implications with respect to group fairness.

\section{Related Work}
\label{sec:Related_Work}
The adoption of post-processing to ensure that differentially private output satisfy some property of interest is commonly adopted in the privacy literature. 
Important contributions include the \emph{hierarchical mechanism} of \cite{hay:10} and its extensions \cite{qardaji:13,cormode:12}, which uses a post-processing step that enforces additive constraints based on a tree structure of the data universe to answer count queries over ranges. Other methods have incorporated a partitioning scheme to the data-release problem to increase the accuracy of the privacy-preserving data by cleverly splitting the privacy budget in different hierarchical levels \cite{xiao2010differentially,cormode2012differentially,zhang2016privtree}. 

These post-processing algorithms have been used to release privacy preserving data sets for a wide array of applications, including transportation \cite{fioretto:AAMAS-18}, location privacy \cite{xi:15}, and energy optimization \cite{fioretto:TSG20}. Of particular interest is the TopDown algorithm \cite{abowd2018us}, used by the US Census for the 2018 end-to-end test in preparation for the 2020 release. The algorithm is based on post-processing to satisfy consistency of hierarchical counts. 


\section{Preliminaries: Differential Privacy}
\label{sec:Differential_Privacy}
\newcommand{\Lap}{\mathrm{Lap}}

\emph{Differential privacy} (DP) \cite{Dwork:06} is a rigorous privacy notion used to protect disclosures of an individual's data in a computation. 
Informally, it states that the probability of any differentially private output does not change much when a single individual data is added or removed to the data set, limiting the amount of information that the output reveals about any individual. 

\begin{definition}[Differential Privacy]
A randomized mechanism $\cM \!:\! \cD \!\to\! \cR$ with
domain $\cD$ and range $\cR$ is $\epsilon$-differentially private if, for any output $O \subseteq \cR$ and data sets $D, D' \in \cD$ differing by at most one entry (written $D \sim D'$),
	\begin{equation}
  	\label{eq:dp_def}
  	\Pr[\cM(D) \in O] \leq \exp(\epsilon)\, Pr[\cM(D') \in O].
	\end{equation}
\end{definition}
\noindent 
The parameter $\epsilon \geq 0$ is the \emph{privacy loss} of the
mechanism, with values close to $0$ denoting strong privacy. 

An important differential privacy property is its \emph{immunity to post-processing}, stating that a differentially private output can be arbitrarily transformed, using some data-independent function, without impacting its privacy guarantees.
\begin{theorem}[Post-Processing \cite{Dwork:06}] 
	\label{th:postprocessing} 
	Let $\cM$ be an $\epsilon$-differentially private mechanism and $g$ be an arbitrary mapping from the set of possible outputs to an arbitrary set. Then, $g \circ \cM$ \mbox{is $\epsilon$-differentially private.}
\end{theorem}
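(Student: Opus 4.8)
The plan is to prove the statement by first reducing to the case where the post-processing map $g$ is deterministic, and then bootstrapping to a general (possibly randomized) $g$ via an independence argument. Throughout, the essential idea is that $g$ is \emph{data-independent}, so any internal randomness it uses is statistically independent of the mechanism $\cM$ and of the underlying data set.

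First I would treat the deterministic case. Fix two adjacent data sets $D \sim D'$ and an arbitrary (measurable) event $T$ in the range of $g$. The composition satisfies $(g \circ \cM)(D) \in T$ if and only if $\cM(D)$ lands in the preimage $S \coloneqq g^{-1}(T) = \{r \in \cR : g(r) \in T\}$. Hence $\Pr[(g \circ \cM)(D) \in T] = \Pr[\cM(D) \in S]$, and applying the $\epsilon$-DP guarantee of $\cM$ to the output set $S$ gives $\Pr[\cM(D) \in S] \le \exp(\epsilon)\,\Pr[\cM(D') \in S] = \exp(\epsilon)\,\Pr[(g \circ \cM)(D') \in T]$. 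Since $T$ was arbitrary, $g \circ \cM$ is $\epsilon$-differentially private in this case.

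Next I would handle a randomized $g$. The trick is to externalize $g$'s randomness: write $g(\cdot) = h(\cdot, R)$, where $R$ carries all of $g$'s internal coin flips and, for each fixed value $\rho$ of $R$, the map $h(\cdot, \rho)$ is deterministic. Because $g$ is data-independent, $R$ is independent of both $\cM(D)$ and $\cM(D')$. Conditioning on $\{R = \rho\}$, the deterministic case above yields $\Pr[(g\circ\cM)(D) \in T \mid R=\rho] \le \exp(\epsilon)\,\Pr[(g\circ\cM)(D')\in T \mid R=\rho]$. Taking the expectation over $R$ preserves the inequality, since the multiplicative factor $\exp(\epsilon)$ is constant and integration is linear and monotone, which delivers the unconditional bound and completes the argument.

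The main obstacle is not any hard computation but rather the careful bookkeeping in the randomized case: one must justify that $g$'s randomness decouples from the data (i.e.\ that $R$ is independent of $\cM(D)$), and that the preimage $S = g^{-1}(T)$ — or its analog $h(\cdot,\rho)^{-1}(T)$ for each fixed $\rho$ — is a measurable event so that the DP guarantee of $\cM$ applies to it. These measure-theoretic points are routinely assumed in the differential privacy literature and can be dispatched by restricting attention to countably generated $\sigma$-algebras or by invoking the standard mixture representation of randomized algorithms.
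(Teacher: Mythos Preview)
Your argument is correct and is essentially the standard textbook proof of post-processing immunity: pull back the output event through $g^{-1}$ in the deterministic case, then average over $g$'s independent coins in the randomized case. There is nothing to fix.

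However, note that the paper does not actually prove Theorem~\ref{th:postprocessing}. It is stated in the preliminaries as a cited background result from \cite{Dwork:06}, with no accompanying proof in the paper itself. So there is no ``paper's own proof'' to compare against; your write-up simply supplies what the paper chose to omit and attribute to the literature.
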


A function $f$ (also called \emph{query}) from a data set $D \in \cD$ to a result set $R \subseteq \RR^n$ can be made differentially private by injecting random noise to its output. 
The amount of noise depends on the \emph{global sensitivity} of the query, denoted by $\Delta_f$ and defined as
\(
	\Delta_f = \max_{D \sim D'} \left\| f(D) - f(D')\right\|_1.
\)

The Laplace distribution with $0$ mean and scale $\lambda$, denoted by $\Lap(\lambda)$, has a probability density function $\Lap(x|\lambda) = \frac{1}{2\lambda}e^{-\frac{|x|}{\lambda}}$. It can be used to obtain an $\epsilon$-differentially private algorithm to answer numeric queries \cite{Dwork:06}. 
In the following, $\Lap(\lambda)^n$ denotes the i.i.d.~Laplace distribution with 0 mean and scale $\lambda$ over $n$ dimensions. 
\begin{theorem}[Laplace Mechanism]
	\label{th:m_lap} 
	Let $f: \cD \to \RR^n$ be a numeric query. The Laplace mechanism that outputs $f(D) + \bm{\eta}$, where $\bm{\eta} \in \RR^n$ is drawn from the Laplace distribution $\Lap(\nicefrac{\Delta_f}{\epsilon})^n$, achieves $\epsilon$-differential privacy.
\end{theorem}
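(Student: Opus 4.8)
The plan is to reduce the set-based definition of $\epsilon$-differential privacy in \eqref{eq:dp_def} to a pointwise bound on the ratio of output densities, and then recover the set statement by integration. Fix two adjacent data sets $D \sim D'$ and write $\lambda = \Delta_f/\epsilon$. Since the noise $\noise$ is drawn i.i.d.\ coordinatewise from $\Lap(\lambda)$ and independently of the data, the output $\cM(D) = f(D) + \noise$ has density
\begin{equation}
p_D(\bm{z}) = \prod_{i=1}^n \frac{1}{2\lambda}\exp\!\left(-\frac{|z_i - f(D)_i|}{\lambda}\right)
\end{equation}
at any $\bm{z} \in \RR^n$, and $p_{D'}$ is given by the same formula with $f(D')$ in place of $f(D)$.

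Next I would bound the likelihood ratio at an arbitrary point. Taking the quotient $p_D(\bm{z})/p_{D'}(\bm{z})$, the normalizing constants cancel and the product of exponentials collapses to
\begin{equation}
\frac{p_D(\bm{z})}{p_{D'}(\bm{z})} = \exp\!\left(\frac{1}{\lambda}\sum_{i=1}^n \bigl(|z_i - f(D')_i| - |z_i - f(D)_i|\bigr)\right).
\end{equation}
The key step is the reverse triangle inequality applied coordinatewise, $|z_i - f(D')_i| - |z_i - f(D)_i| \le |f(D)_i - f(D')_i|$, which removes the dependence on the realized output $\bm{z}$ and leaves
\begin{equation}
\frac{p_D(\bm{z})}{p_{D'}(\bm{z})} \le \exp\!\left(\frac{1}{\lambda}\sum_{i=1}^n |f(D)_i - f(D')_i|\right) = \exp\!\left(\frac{\norm{f(D) - f(D')}_1}{\lambda}\right).
\end{equation}
At this point I would invoke the definition of global sensitivity, $\norm{f(D) - f(D')}_1 \le \Delta_f$, together with the choice $\lambda = \Delta_f/\epsilon$, to conclude that the pointwise ratio is at most $\exp(\Delta_f/\lambda) = \exp(\epsilon)$.

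Finally, I would lift the pointwise bound to the set-based definition. For any measurable output region $O \subseteq \RR^n$, integrating the inequality $p_D(\bm{z}) \le \exp(\epsilon)\, p_{D'}(\bm{z})$ over $\bm{z} \in O$ yields $\pr{\cM(D) \in O} \le \exp(\epsilon)\,\pr{\cM(D') \in O}$, which is exactly the required guarantee. The only genuinely delicate step is the reverse triangle inequality: it is what makes the ratio independent of the output and reduces everything to the $\ell_1$ sensitivity, so I would state it coordinatewise and then sum, rather than attempting to bound the $\ell_1$ norm of the difference of absolute values directly. The remaining manipulations---cancellation of the normalizing constants, the exponential algebra, and the final integration---are routine.
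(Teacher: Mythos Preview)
Your argument is correct and is exactly the standard proof of the Laplace mechanism (density ratio, reverse triangle inequality coordinatewise, global sensitivity bound, then integrate). The paper itself does not supply a proof of this theorem: it is stated in the preliminaries as a known result with a citation to \cite{Dwork:06}, so there is nothing to compare against beyond noting that your write-up reproduces the classical argument from that reference.
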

\noindent

\section{Settings and Goal}
\label{sec:Settings_and_Goal}
\begin{figure}
\centering
\includegraphics[width=0.85\linewidth]{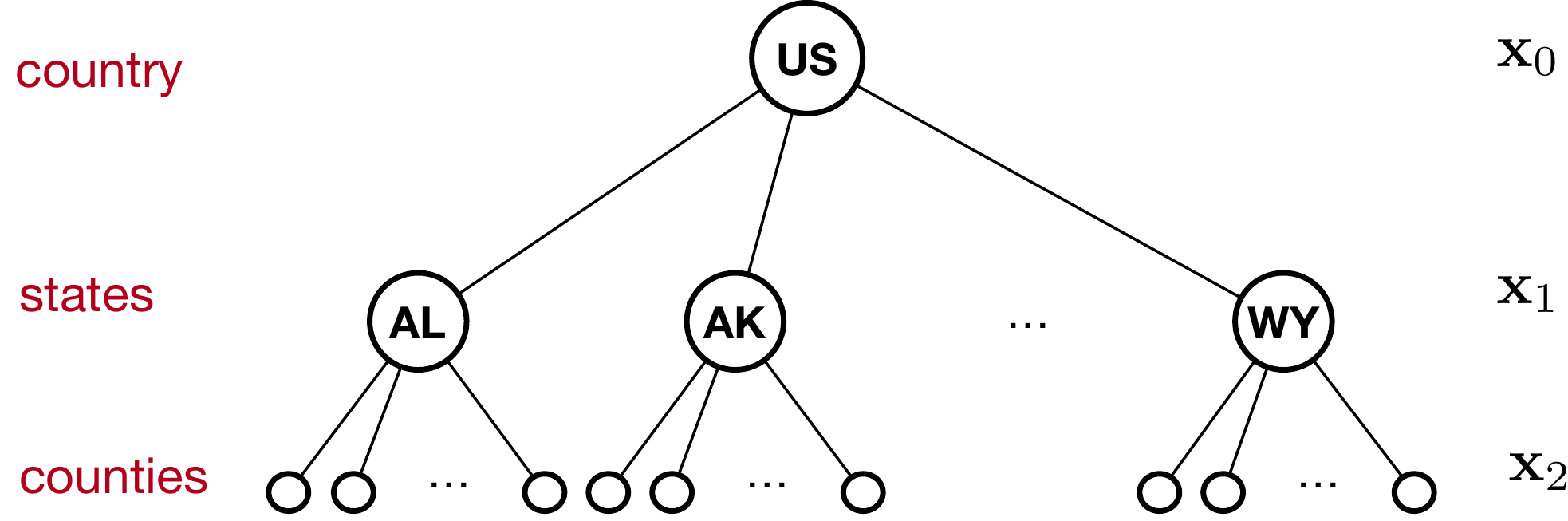}
\caption{Example of hierarchical data set. \label{fig:example}}
\end{figure}

The paper uses the following notation: boldface symbols denote vectors while italic symbols are used to denote scalars or random variables. 
The paper considers data sets of the form $\bm{x} \in \RR^n$, where each element $x_i$ of $\bm{x}$ is a real-valued quantity describing, for example, the number of individuals living in a geographical region. To produce  $\epsilon$-differentially private outputs, this work adopts the Laplace mechanism which, for an appropriately chosen $\lambda$, produces a new privacy-preserving data set $\tilde{\bm{x}} = \bm{x} + \Lap(\lambda)^n$. However, all results presented in this paper generalizes to other symmetric distributions as discussed later. 

The original data $\bm{x}$ is assumed to satisfy a set of data independent constraints $\cK$. This paper focuses on the case where $\cK$ is a set of linear constraints which, as mentioned in the introduction, arise in a widespread number of applications \cite{fioretto2019differential,abowd2018us,xi:15,fioretto:TSG20}. 
Of particular relevance to this work are \emph{hierarchical data release problems}, as those faced by the US Census Bureau. Consider the illustration in Figure \ref{fig:example}. The tree depicts the hierarchy of the US territories, partitioned in states and counties. Each node is associated with a value representing the number of individuals living in the corresponding territory. The constraint set $\cK$ then specifies that the value of a node is the sum of the values of its children and that all values are non-negative.

Due to the use of independent noise, the differentially private version $\tilde{\bm{x}}$ of $\bm{x}$, may not satisfy the original constraints. This scenario happens with very high probability in the hierarchical data-release problem considered. The paper, thus, focuses on mechanisms that generates outputs $\hat{\bm{x}}$ that satisfy two properties: (1) they guarantee $\epsilon$-differential privacy, and (2) $\hat{\bm{x}}$ satisfy the constraints in $\cK$.  

\subsection*{Projection Operators}

To meet these two objectives, the paper studies an important class of post-processing operators, called \emph{projections}, that transform released data $\tilde{\bm{x}}$ to satisfy the constraints in $\cK$. This paper focuses on the following two projections:

\begin{equation}
    \begin{aligned}
        \postdata\coloneqq~&\underset{\bm{v}\in \cK}{\arg\min}
        \norm{\bm{v}-\noisydata}_2 \\
        \cK &= \{\bm{v} \in \mathbb{R}^n \mid 
        A\bm{v} = \bm{b}\}\!\!\!
    \end{aligned}\tag{$P$} \label{program}
\end{equation}
and,
\begin{equation}
    \begin{aligned}
        \nnpostdata
        \coloneqq~&\underset{\bm{v}\in \cK}{\arg\min}
        \norm{\bm{v}-\noisydata}_2 \\
        \cK &= \{\bm{v} \in \mathbb{R}^n \mid  
        A\bm{v} = \bm{b}; \bm{v} \geq \bm{0}\}
    \end{aligned}\tag{$\nnprogram$}\!\!\!\!\!\! \label{nnprogram}
\end{equation}

\noindent 
where $\tilde{\bm{x}}$ is the privacy-preserving input to the projection operators, obtained by applying the Laplace mechanism to $\bm{x}$, $A$ is an $m\times n$ matrix, and $\bm{b}$ is an $m$-dimensional vector. $A$ and $\bm{b}$ are assumed to be public, non-sensitive information in this paper. By the post-processing immunity of differential privacy (Theorem \ref{th:postprocessing}) the projections operators \eqref{program} and \eqref{nnprogram} satisfy differential privacy. Both optimizations find a feasible solution that minimizes the $l_2$-distance to the noisy data $\noisydata$. The existence and uniqueness of their solutions are guaranteed by convexity. 
These programs have been adopted by a vast array of applications. In particular, the census hierarchical data-release problem, analyzed in this paper as a case study, restores consistency of the hierarchical constraints  using an instance of problem \eqref{nnprogram}.

The theoretical results in this paper are illustrated using an empirical analysis from this census case study. For each instance associated with the true counts $\truedata$, 
the noise $\noise$ is i.i.d. drawn from the double-sided geometric distribution $\noise\sim \text{Geom}(\nicefrac{\Delta_f}{\epsilon})^n$, i.e., the discrete analog to the Laplace distribution. The results in this paper are generally presented for continuous distributions but they carry over naturally to this geometric distribution. The privacy budget $\epsilon$ is set to be $0.5$ and the experiments perform 100 independent runs.

\section{Analysis of Bias in Post-Processing}

\subsection{Bias of Program \eqref{program}}

This section studies the bias induced by program \eqref{program}, when the noisy data $\noisydata$ is obtained by applying noise drawn from a symmetric probability distribution. Recall that a distribution with probability density function $f$ is symmetric if there exists a value  $x_{0}$ such that $f(x_{0}-\delta )=f(x_{0}+\delta)$ for all $\delta$. This is the case of the Laplace and the double-sided geometric distributions. This section relies on the concept of a reflection operators. 

\begin{definition}[Reflection operator]
    The operator $\refopt_{\bm{v}}(\cdot)$ is said to be a reflection
    operator across the vector $\bm{v}\in\RR^n$ if,
    for any $\bm{u}\in\RR^n$, the following identity holds:
    \begin{equation*}
        \refopt_{\bm{v}}(\bm{u}) = 2\bm{v} - \bm{u}\,.
    \end{equation*}
\end{definition}

\begin{lemma}\label{lem:err_sum_0}
    The reflection operator $\refopt_{\truedata}$ and $\postdata$ (as an operator) are commutative, i.e.,
    \begin{equation}\label{eq:lem1}
        \refopt_{\truedata}\left(\postdata(\noisydata)\right)=\postdata\left(\refopt_{\truedata}(\noisydata)\right)\,.
    \end{equation}
\end{lemma}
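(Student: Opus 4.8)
The plan is to build the proof on two elementary observations about the point reflection $\refopt_{\truedata}$: that it is a Euclidean isometry, and that it maps the feasible region $\cK$ onto itself. The first is immediate, since $\norm{\refopt_{\truedata}(\bm{u}) - \refopt_{\truedata}(\bm{w})} = \norm{\bm{w} - \bm{u}} = \norm{\bm{u} - \bm{w}}$ for all $\bm{u},\bm{w}$. The second relies on the standing assumption that the true data satisfies the constraints, i.e.\ $A\truedata = \bm{b}$, so that $\truedata \in \cK$: for any $\bm{v} \in \cK$ we have $A(2\truedata - \bm{v}) = 2\bm{b} - \bm{b} = \bm{b}$, hence $\refopt_{\truedata}(\bm{v}) \in \cK$, and since $\refopt_{\truedata}$ is its own inverse this map is a bijection of $\cK$. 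This invariance is the one place where $\truedata \in \cK$ is genuinely used, and I expect it to be the conceptual crux of the argument.

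With these facts in hand, I would conclude using the variational characterization of the projection together with its uniqueness (guaranteed by convexity, as noted after \eqref{nnprogram}). Let $\postdata(\noisydata)$ be the unique minimizer of $\norm{\bm{v} - \noisydata}$ over $\bm{v} \in \cK$. The point $\refopt_{\truedata}(\postdata(\noisydata))$ lies in $\cK$ by the invariance above, so it suffices to show it minimizes the distance to $\refopt_{\truedata}(\noisydata)$ over $\cK$. For an arbitrary $\bm{w} \in \cK$, using in turn that $\refopt_{\truedata}$ is an involution, then an isometry, then the optimality of $\postdata(\noisydata)$, then an isometry again, one obtains
\begin{align*}
    \norm{\bm{w} - \refopt_{\truedata}(\noisydata)}
    &= \norm{\refopt_{\truedata}(\bm{w}) - \noisydata} \\
    &\ge \norm{\postdata(\noisydata) - \noisydata}
    = \norm{\refopt_{\truedata}(\postdata(\noisydata)) - \refopt_{\truedata}(\noisydata)},
\end{align*}
where the inequality holds because $\refopt_{\truedata}(\bm{w}) \in \cK$. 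Thus $\refopt_{\truedata}(\postdata(\noisydata))$ attains the minimum distance to $\refopt_{\truedata}(\noisydata)$ among points of $\cK$, and uniqueness forces it to equal $\postdata(\refopt_{\truedata}(\noisydata))$, which is exactly \eqref{eq:lem1}.

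As an alternative that bypasses the variational argument, I could instead exploit the fact that projection onto the affine subspace $\cK$ is itself an affine map. Fixing the reference point $\truedata \in \cK$ and writing $\Pi$ for the linear orthogonal projection onto the direction space $\{\bm{v} : A\bm{v} = \bm{0}\}$, one has $\postdata(\bm{u}) = \truedata + \Pi(\bm{u} - \truedata)$. Substituting $\bm{u} = \noisydata$ and $\bm{u} = \refopt_{\truedata}(\noisydata) = 2\truedata - \noisydata$ and using the linearity of $\Pi$, both sides of \eqref{eq:lem1} collapse to $\truedata - \Pi(\noisydata - \truedata)$, giving the identity directly. Either route makes clear that the only substantive ingredient is $\truedata \in \cK$; everything else is routine bookkeeping with isometries or with the linear projection formula.
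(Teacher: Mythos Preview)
Your first argument is correct and essentially mirrors the paper's proof: both use that $\refopt_{\truedata}$ is a Euclidean isometry that maps $\cK$ onto itself (via $A\truedata=\bm{b}$) to transfer the minimizer, and then invoke uniqueness from convexity. Your alternative via the explicit affine projection formula $\postdata(\bm{u})=\truedata+\Pi(\bm{u}-\truedata)$ is a slightly more direct route the paper does not take, but it reaches the same conclusion with no additional assumptions.
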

\begin{proof}
   The right hand side of \eqref{eq:lem1} is given by
    \begin{equation*}
        \begin{aligned}
        \bm{x}'\coloneqq\underset{\bm{v}\in\RR^n}{\arg\min}~& 
        \norm{\bm{v}-\refopt_{\truedata}(\noisydata)}_2 \\
        \text{s.t.}~ & A\bm{v}=\bm{b}\,,
    \end{aligned}
    \end{equation*}
    where $\bm{x}'$ is a shorthand for the solution $\postdata(\refopt_{\truedata}(\noisydata))$.
    By reflection, $\refopt_{\truedata}(\bm{x}')$ is a solution to the  optimization problem:
    \begin{equation*}
        \begin{aligned}
            \refopt_{\truedata}(\bm{x}')=\underset{\bm{v}\in\RR^n}{\arg\min}~& 
            \norm{\refopt_{\truedata}({\bm{v}})-\refopt_{\truedata}(\noisydata)}_2 \\
            \text{s.t.}~ & A\refopt_{\truedata}({\bm{v}})=\bm{b}\,.
        \end{aligned}
    \end{equation*}
    By the definition of the reflection operator and the feasibility of the true data, we have that
     \begin{align}
        &\norm{\refopt_{\truedata}({\bm{v}})-\refopt_{\truedata}(\noisydata)}_2
        = \norm{\bm{v}-\noisydata}_2\nonumber\,,\\
        &A\bm{v}=A(2\truedata-\refopt_{\truedata}({\bm{v}}))=2\bm{b}-\bm{b}=\bm{b}
    \end{align}
    and the previous optimization problem is equivalent to \eqref{program}:
    \begin{equation*}
        \begin{aligned}
            \refopt_{\truedata}(\bm{x}')=\underset{\bm{v}\in\RR^n}{\arg\min}~& 
            \norm{{\bm{v}}-\noisydata}_2 \\
            \text{s.t.}~ & A{\bm{v}}=\bm{b},
        \end{aligned}
    \end{equation*}
    because
    since \eqref{program} is convex, $\refopt_{\truedata}({\bm{x}'})=\postdata(\noisydata)$. By applying the reflection operator on both sides, $\bm{x}'=\postdata(\refopt_{\truedata}(\noisydata))=\refopt_{\truedata}(\postdata(\noisydata)).$
\end{proof}

\begin{figure}[t]
    \centering
     \resizebox{0.6\linewidth}{!}{%
     \begin{tikzpicture}
        \draw[->] (-0.3,0) -- (7,0) node[right] {$v_1$};
        \draw[->] (0,-0.4) -- (0,3.5) node[above] {$v_2$};
        \draw[thick] (-0.3, 3.15) -- (6.6, -0.3);
        
        \coordinate (truedata) at (3, 1.5);
        \coordinate (noisydata) at (1.7, 3.4);
        \coordinate (refnoisydata) at (4.3, -0.4);
        \coordinate (postdata) at (1.2, 2.4);
        \coordinate (refpostdata) at (4.8, 0.6);
        
        \fill (truedata) circle (2.5pt) node[right] {$~\truedata$};
        \fill (noisydata) circle (2.5pt) node[right] {$\noisydata$};
        \fill (refnoisydata) circle (2.5pt) node[right] {$\refopt_{\truedata}(\noisydata)$};
        \fill (postdata) circle (2.5pt) node[below] {$\postdata(\noisydata)$} ;
        \fill (refpostdata) circle (2.5pt) node[right] {$~\postdata(\refopt_{\truedata}(\noisydata))$} ;
        
        \draw[dashed] (noisydata) -- (postdata);
        \draw[dashed] (refnoisydata) -- (refpostdata);
        \draw[dashed] (noisydata) -- (refnoisydata);
    \end{tikzpicture}
    }
    \caption{Illustration of Lemma \ref{lem:err_sum_0} and Corollary \ref{cor:err_sum_0}.}\label{fig:err_sum_0}
\end{figure}
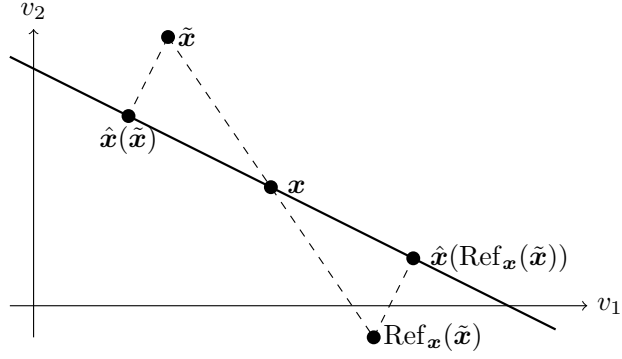

\noindent
Figure \ref{fig:err_sum_0} illustrates Lemma \ref{lem:err_sum_0}: It shows that the true data $\truedata$ is the midpoint between the post-processed solutions associated with the noisy data $\noisydata$ and its reflection. 

Let $\err{\bm{y}}=\bm{y} - \truedata$, i.e., the entrywise difference between $\bm{y}$ and the true data. 
\begin{corollary}\label{cor:err_sum_0}
    The errors associated with the noisy data $\noisydata$ and its reflection
    $\refopt_{\truedata}(\noisydata)$ sums to $\bm{0}$, i.e.,
    \begin{equation*}
        \err{\postdata(\noisydata)}+\err{\postdata(\refopt_{\truedata}(\noisydata))}=\bm{0}\,.
    \end{equation*}
\end{corollary}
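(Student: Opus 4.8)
The plan is to derive this directly from Lemma~\ref{lem:err_sum_0}, since the corollary is essentially an unpacking of the commutativity identity together with the definition of the error. First I would write out the two error terms explicitly using $\err{\bm{y}}=\bm{y}-\truedata$:
\begin{align*}
    \err{\postdata(\noisydata)} &= \postdata(\noisydata) - \truedata\,,\\
    \err{\postdata(\refopt_{\truedata}(\noisydata))} &= \postdata(\refopt_{\truedata}(\noisydata)) - \truedata\,.
\end{align*}

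The key step is to rewrite the second post-processed solution using Lemma~\ref{lem:err_sum_0}, which gives $\postdata(\refopt_{\truedata}(\noisydata)) = \refopt_{\truedata}(\postdata(\noisydata))$. Applying the definition of the reflection operator, $\refopt_{\truedata}(\bm{u}) = 2\truedata - \bm{u}$, to $\bm{u} = \postdata(\noisydata)$ then yields $\postdata(\refopt_{\truedata}(\noisydata)) = 2\truedata - \postdata(\noisydata)$. Substituting this into the second error term shows $\err{\postdata(\refopt_{\truedata}(\noisydata))} = \truedata - \postdata(\noisydata)$, which is exactly the negative of the first error term.

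Summing the two contributions, the $\postdata(\noisydata)$ and $\truedata$ terms cancel in pairs, giving $\bm{0}$ as claimed. There is no real obstacle here: the geometric content (that $\truedata$ is the midpoint of the two post-processed points, as depicted in Figure~\ref{fig:err_sum_0}) is entirely supplied by Lemma~\ref{lem:err_sum_0}, and the corollary only restates that midpoint property in the additive language of the error vectors. The one point worth stating carefully is that the cancellation is entrywise, matching the componentwise definition of $\err{\cdot}$, so the identity holds as an equality of vectors in $\RR^n$.
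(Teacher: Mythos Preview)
Your proposal is correct and follows essentially the same approach as the paper: invoke Lemma~\ref{lem:err_sum_0} to replace $\postdata(\refopt_{\truedata}(\noisydata))$ by $\refopt_{\truedata}(\postdata(\noisydata))=2\truedata-\postdata(\noisydata)$, then expand the two error terms and observe they cancel. The paper's proof is just a slightly more compressed version of the same computation.
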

\begin{proof}
By Lemma \ref{lem:err_sum_0}, 
    \begin{eqnarray*}
    &&\err{\postdata(\noisydata)}+\err{\postdata(\refopt_{\truedata}(\noisydata))}\\
        &=&\err{\postdata(\noisydata)}+\err{\refopt_{\truedata}\left(\postdata(\noisydata)\right)}\\
        &=&\left(\postdata(\noisydata)-\truedata\right) + \left(\refopt_{\truedata}\left(\postdata(\noisydata)\right)-\truedata\right) = \bm{0}.
    \end{eqnarray*}
\end{proof}

\noindent 
The following theorem is a positive result: It shows that program \eqref{program} does not introduce bias.

\begin{theorem}\label{thm:no_bias}
     Program \eqref{program} does not introduce bias, i.e.,
    \begin{equation*}
       \bias{\postdata(\noisydata)}\coloneqq\EE{\noisydata}{\err{\postdata(\noisydata)}}=\bm{0}\,,
    \end{equation*}
    where the expectation is taken over the distribution of the noisy data $\noisydata$.
    In other words, the post-processed solution $\postdata$ (as a random vector) is \emph{unbiased}.
\end{theorem}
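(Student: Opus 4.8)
The plan is to exploit the symmetry already captured by Corollary~\ref{cor:err_sum_0} together with the symmetry of the noise distribution. The key observation is that since the Laplace (or any symmetric) noise is centered at the true data $\truedata$, the reflection operator $\refopt_{\truedata}$ maps the noise distribution to itself. Concretely, if $\noisydata = \truedata + \noise$ with $\noise$ drawn from a symmetric distribution centered at $\bm{0}$, then $\refopt_{\truedata}(\noisydata) = 2\truedata - \noisydata = \truedata - \noise$, which has exactly the same distribution as $\noisydata$ because $\noise$ and $-\noise$ are identically distributed. This is the crux: the reflection is a measure-preserving involution on the sample space of $\noisydata$.

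First I would write the bias as the expectation $\bias{\postdata(\noisydata)} = \EE{\noisydata}{\err{\postdata(\noisydata)}}$ and introduce the change of variables $\noisydata \mapsto \refopt_{\truedata}(\noisydata)$. Because this map preserves the distribution of $\noisydata$, the expected error is unchanged under it, so
\begin{equation*}
    \EE{\noisydata}{\err{\postdata(\noisydata)}} = \EE{\noisydata}{\err{\postdata(\refopt_{\truedata}(\noisydata))}}.
\end{equation*}
Averaging these two equal expressions and invoking Corollary~\ref{cor:err_sum_0}, which states that the two errors sum to $\bm{0}$ pointwise, gives
\begin{equation*}
    \bias{\postdata(\noisydata)} = \tfrac{1}{2}\,\EE{\noisydata}{\err{\postdata(\noisydata)} + \err{\postdata(\refopt_{\truedata}(\noisydata))}} = \tfrac{1}{2}\,\EE{\noisydata}{\bm{0}} = \bm{0}.
\end{equation*}
This symmetrization argument is clean and avoids computing the projection explicitly.

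The main obstacle, though it is more of a bookkeeping matter than a genuine difficulty, is justifying that the reflection is genuinely distribution-preserving. I would need to state carefully that the noise $\noise$ enters coordinatewise and that each marginal is symmetric about zero, so that $\refopt_{\truedata}(\noisydata)$ and $\noisydata$ share the same law; for the i.i.d.\ Laplace case this is immediate, and the paper already flags that the argument extends to any symmetric distribution. A secondary technical point is ensuring integrability so that the expectation and the change of variables are both well-defined, but since the projection onto an affine subspace is a nonexpansive (indeed affine) map of $\noisydata$, the error has at most linear growth in $\noise$, and the Laplace distribution has finite mean, so all expectations exist. With those points settled, the two ingredients—distributional invariance under reflection and the pointwise cancellation from Corollary~\ref{cor:err_sum_0}—combine directly to yield the claim.
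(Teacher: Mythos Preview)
Your proposal is correct and follows essentially the same approach as the paper: both arguments average the error at $\noisydata$ with the error at $\refopt_{\truedata}(\noisydata)$, invoke the symmetry of the noise distribution to equate the two expectations, and then apply Corollary~\ref{cor:err_sum_0} for the pointwise cancellation. The paper carries this out explicitly with density integrals whereas you phrase it in terms of distributional invariance under the reflection, but the underlying mechanism is identical.
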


\begin{proof}
    Let $f_{\noisydata}$ denote the probability density function of the noisy data $\noisydata$,
    which is symmetric with respect to the true data $\truedata$.
    Then, the expectation of the resulting error is computed as follows.
    \begin{align}
        &\EE{\noisydata}{\err{\postdata(\noisydata)}}\nonumber
        \!=\!\int_{\bm{y}\in\RR^n}\!\!\!\!\!\!\!\!\!
        \err{\postdata(\bm{y})}\cdot f_{\noisydata}(\bm{y})d\bm{y}\nonumber\\
        &=\frac{1}{2}\int_{\bm{y}\in\RR^n}\!\!\!\!\!\!\!\!\!
        \err{\postdata(\bm{y})}\cdot f_{\noisydata}(\bm{y})d\bm{y}
        ~+\nonumber \\
        &\hspace{14pt}
        \frac{1}{2}\int_{\bm{y}\in\RR^n}\!\!\!\!\!\!\!\!\!
        \err{\postdata(\refopt_{\bm{x}}(\bm{y}))}\cdot
        f_{\noisydata}(\refopt_{\bm{x}}(\bm{y}))d\bm{y}
        \nonumber\\
        &=\frac{1}{2}\int_{\bm{y}\in\RR^n}\!\!\!\!\!\!\!\!\!
        \left[\err{\postdata(\bm{y})}+\err{\postdata(\refopt_{\bm{x}}(\bm{y}))}\right]\cdot
        f_{\noisydata}(\bm{y})d\bm{y}\label{eq:no_bias_1}\\
        &=\frac{1}{2}\int_{\bm{y}\in\RR^n}\!\!\!\!\!\!\!\!\!
        \bm{0}\cdot
        f_{\noisydata}(\bm{y})d\bm{y}\label{eq:no_bias_2}\\
        &=\bm{0}\,,\nonumber
    \end{align}
    where Equation \eqref{eq:no_bias_1} comes from the symmetric distribution of the noisy data $\noisydata$, i.e., for any
    $\bm{y}\in\RR^n$,
    \begin{equation*}
        f_{\noisydata}(\bm{y})=f_{\noisydata}(\truedata+(\bm{y}-\truedata))=f_{\noisydata}(\truedata-(\bm{y}-\truedata))=f_{\noisydata}(\refopt_{\bm{x}}(\bm{y})),
    \end{equation*}
    and Equation
    \eqref{eq:no_bias_2} is due to Corollary \ref{cor:err_sum_0}.
\end{proof}


\subsection{Bias of Program \eqref{nnprogram}}

Theorem \ref{thm:no_bias} indicates that the bias in program \eqref{nnprogram} comes from the non-negativity constraints. The section bounds this bias by leveraging the insights of Theorem \ref{thm:no_bias}. It assumes that the feasible region $\nnregion$ is bounded (which holds, in many practical setting, including in the census data release case) and that the noisy data $\noisydata$ is the output of the Laplace mechanism applied to the true data $\truedata$, i.e., $\noisydata=\truedata+\text{Lap}(\lambda)^n$. It will leverage the prior positive results by isolating a subset of the feasible space close under refection. The first lemma computes the probability that the Laplace mechanism produces an output in a ball of radius $r$. The proof is by induction on the dimension $n$.

\begin{lemma}\label{lem:l1_ball}
    Given a random vector $\noise = [\eta_1,\dots,\eta_n]$, where $\{\eta_i\}_{i\in[n]}$ are i.i.d.
    random variables drawn from a Laplace distribution ${\rm Lap}(\lambda)$ ($\lambda >0$),
     the following identity holds for any $r\geq 0$:
    \begin{equation}\label{eq:prob_comp}
        \pr{\noise\in \BB{r}{\bm{0}}} = 1-\exp\left(\frac{-r}{\lambda}\right)\cdot \sum_{i=0}^{n-1}\frac{r^i}{i!\cdot\lambda^i}\,,
    \end{equation}
    where $\BB{r}{\bm{0}}$ is the $l_1$-ball of radius $r$ centered at $\bm{0}$, i.e.,
    \begin{equation*}
        \BB{r}{\bm{0}}=\left\{\bm{v}\in \RR^n\mid \norm{\bm{v}}_1\leq r\right\}\,.
    \end{equation*}
\end{lemma}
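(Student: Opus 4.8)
The plan is to reduce the $n$-dimensional probability to a one-dimensional statement about a sum of independent exponential random variables, and then run the induction on $n$ suggested in the statement. Since $\noise\in\BB{r}{\bm{0}}$ exactly when $\norm{\noise}_1=\sum_{i=1}^n|\eta_i|\le r$, and since each $|\eta_i|$ is an exponential random variable with mean $\lambda$ (its density on $[0,\infty)$ is $\tfrac{1}{\lambda}e^{-t/\lambda}$, obtained by folding the symmetric Laplace density), the quantity $\pr{\noise\in\BB{r}{\bm{0}}}$ is just the value at $r$ of the cumulative distribution function of the Erlang sum $S_n=\sum_{i=1}^n|\eta_i|$. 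The right-hand side of \eqref{eq:prob_comp} is precisely the closed form of that CDF, so the lemma amounts to verifying this closed form by induction.

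For the base case $n=1$, I would compute $\pr{|\eta_1|\le r}=\int_{-r}^{r}\tfrac{1}{2\lambda}e^{-|x|/\lambda}\,dx=1-e^{-r/\lambda}$, which matches the formula, since the sum $\sum_{i=0}^{0}$ contributes only the $i=0$ term, equal to $1$. For the inductive step, assuming the identity for $n-1$, I would write $S_n=S_{n-1}+|\eta_n|$ and condition on $|\eta_n|=t$, giving the convolution
\begin{equation*}
    \pr{S_n\le r}=\int_0^r \pr{S_{n-1}\le r-t}\cdot\tfrac{1}{\lambda}e^{-t/\lambda}\,dt.
\end{equation*}
Substituting the inductive hypothesis for $\pr{S_{n-1}\le r-t}$ and splitting the integral, the ``$1$'' term integrates to $1-e^{-r/\lambda}$. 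In the remaining terms the exponential factors combine as $e^{-(r-t)/\lambda}e^{-t/\lambda}=e^{-r/\lambda}$, which pulls out of the integral and leaves only $\int_0^r (r-t)^i\,dt=\tfrac{r^{i+1}}{i+1}$ to evaluate. After this evaluation and a shift of the summation index $i\mapsto i+1$, the residual sum becomes $-e^{-r/\lambda}\sum_{j=1}^{n-1}\tfrac{r^j}{j!\,\lambda^j}$; combining it with $1-e^{-r/\lambda}$ reassembles the $j=0$ term and yields exactly $1-e^{-r/\lambda}\sum_{j=0}^{n-1}\tfrac{r^j}{j!\,\lambda^j}$, completing the induction.

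The only delicate point is the bookkeeping in the inductive step: one must check that the constant term $1-e^{-r/\lambda}$ from the first part of the split integral supplies precisely the missing $j=0$ summand, so that the index range $0\le j\le n-1$ closes up, and that the cancellation $e^{-(r-t)/\lambda}e^{-t/\lambda}=e^{-r/\lambda}$ is what renders the $t$-integral elementary. I expect no genuine obstacle beyond this indexing care. As a sanity check independent of the induction, I would note that $S_n\sim\mathrm{Gamma}(n,1/\lambda)$ with integer shape parameter, whose CDF is the classical Erlang form $1-e^{-r/\lambda}\sum_{i=0}^{n-1}(r/\lambda)^i/i!$, confirming the target identity and suggesting that an entirely non-inductive proof is also available if a more self-contained argument is preferred.
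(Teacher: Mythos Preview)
Your proposal is correct and follows essentially the same route as the paper, which proves the lemma by induction on the dimension $n$. Your reduction of $\norm{\noise}_1$ to a sum of i.i.d.\ exponentials makes the convolution in the inductive step clean, and the Erlang/Gamma observation you add at the end is a nice independent sanity check that the paper does not mention.
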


\noindent
A similar result can be obtained for the double-sided geometric distribution. If the noisy data follows a $\text{Geom}(a)^n$ distribution, then 
\begin{equation*}
        \pr{\noise\in \BB{r}{\bm{0}}} = 1-\frac{2a^{r+1}}{1+a} \sum_{i=0}^{n-1} h_i(r)\cdot \left(\frac{1-a}{1+a}\right)^i,
    \end{equation*}
where $\{h_i\}_{i\in \NN}$ is a family of polynomials with $h_0(r)=1$ and $h_{i+1}(r)=\sum_{v=-r}^r h_i(r-\vert v\vert)$ for any
$i\in \NN$.
\noindent
The rest of this section is presented in terms of the Laplace distribution but the results can be generalized to any distribution satisfying a version of Lemma \ref{lem:l1_ball}.

\begin{corollary}\label{col:prob_bound}
    Suppose that the noisy data $\noisydata$ is the output of the Laplace mechanism, i.e., $\noisydata = \truedata+{\rm Lap}(\lambda)^n$ with $\lambda >0$. Then,
    for any $r\geq 0$,
    \begin{equation*}
        \pr{\noisydata\in \BB{r}{\truedata}}=1-\exp\left(\frac{-r}{\lambda}\right)\cdot \sum_{i=0}^{n-1}\frac{r^{i}}{i!\cdot\lambda^{i}}.
    \end{equation*}
\end{corollary}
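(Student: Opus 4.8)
The plan is to reduce Corollary \ref{col:prob_bound} directly to Lemma \ref{lem:l1_ball} via a simple change of variables, since the two statements are essentially the same formula. The key observation is that the noisy data is $\noisydata = \truedata + \noise$ where $\noise \sim {\rm Lap}(\lambda)^n$ is exactly the i.i.d.\ Laplace random vector from Lemma \ref{lem:l1_ball}. The event $\noisydata \in \BB{r}{\truedata}$ unpacks, by the definition of the $l_1$-ball centered at $\truedata$, to $\norm{\noisydata - \truedata}_1 \leq r$, i.e.\ $\norm{\noise}_1 \leq r$, which is precisely the event $\noise \in \BB{r}{\bm{0}}$.

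Concretely, I would first write $\BB{r}{\truedata} = \{\bm{v} \in \RR^n \mid \norm{\bm{v} - \truedata}_1 \leq r\}$, so that membership $\noisydata \in \BB{r}{\truedata}$ is equivalent to $\norm{\noisydata - \truedata}_1 \leq r$. Substituting $\noisydata - \truedata = \noise$ gives the chain of equalities
\begin{equation*}
    \pr{\noisydata \in \BB{r}{\truedata}} = \pr{\norm{\noisydata - \truedata}_1 \leq r} = \pr{\norm{\noise}_1 \leq r} = \pr{\noise \in \BB{r}{\bm{0}}}.
\end{equation*}
Then I would apply Lemma \ref{lem:l1_ball} to the right-hand side, which immediately yields the claimed expression $1 - \exp(-r/\lambda) \cdot \sum_{i=0}^{n-1} r^i/(i!\,\lambda^i)$.

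There is no substantive obstacle here: the corollary is a cosmetic translation of the lemma from noise centered at the origin to noisy data centered at the true value $\truedata$. The only things to check are purely formal, namely that the translation preserves the $l_1$-ball structure (which it does, since the $l_1$-norm is translation invariant under the shift by $\truedata$) and that $\noise = \noisydata - \truedata$ has exactly the i.i.d.\ Laplace distribution required by the hypothesis of Lemma \ref{lem:l1_ball}. Both hold by construction of the Laplace mechanism, so the proof is a single line once the change of variables is stated.
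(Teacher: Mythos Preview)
Your proposal is correct and is essentially identical to the paper's own proof: both set $\noise = \noisydata - \truedata$, observe that $\noisydata \in \BB{r}{\truedata}$ iff $\noise \in \BB{r}{\bm{0}}$, and then invoke Lemma~\ref{lem:l1_ball}. There is nothing to add.
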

\begin{proof}
    Let $\noise$ denote the $n$-dimensional random vector of the Laplacian noise added to the true data
    $\truedata$, i.e., $\noise=\noisydata - \truedata$. By the definition of the Laplace mechanism,
    $\noise=[\eta_1,\dots,\eta_n]$ consists of $n$ i.i.d. components, each of which is drawn from the Laplace
    distribution $\text{Lap}(\lambda)$. Then, by Lemma \ref{lem:l1_ball}, 
    \begin{align*}
        \Pr(\noise &\in \BB{r}{\bm{0}}) = \pr{\noisydata-\truedata\in \BB{r}{\bm{0}}}\\
        &=1-\exp\left(\frac{-r}{\lambda}\right)\cdot \sum_{i=0}^{n-1}\frac{r^i}{i!\cdot\lambda^i}\,,\qquad\forall~r\geq 0\,.
    \end{align*}
    Since $\noisydata-\truedata\in \BB{r}{\bm{0}}$ iff 
    $\noisydata\in\BB{r}{\truedata}$, for any $r\geq 0$,
    \begin{equation*}
        \pr{\noisydata\in \BB{r}{\truedata}}=1-\exp\left(\frac{-r}{\lambda}\right)\cdot \sum_{i=0}^{n-1}\frac{r^{i}}{i!\cdot\lambda^{i}}.
    \end{equation*}
\end{proof}

\noindent
Let $\radius$ be $\min_{i\in[n]}\bm{x}_i$. The next lemma states that $\BB{\radius}{\truedata}$ is a feasible subspace where there is no bias.

\begin{figure*}[!th]
\centering
\includegraphics[width=0.4\linewidth]{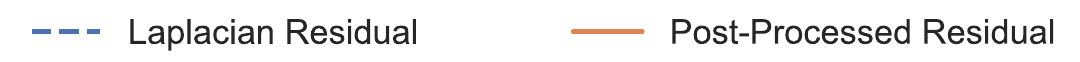}\\
\includegraphics[width=0.85\linewidth]{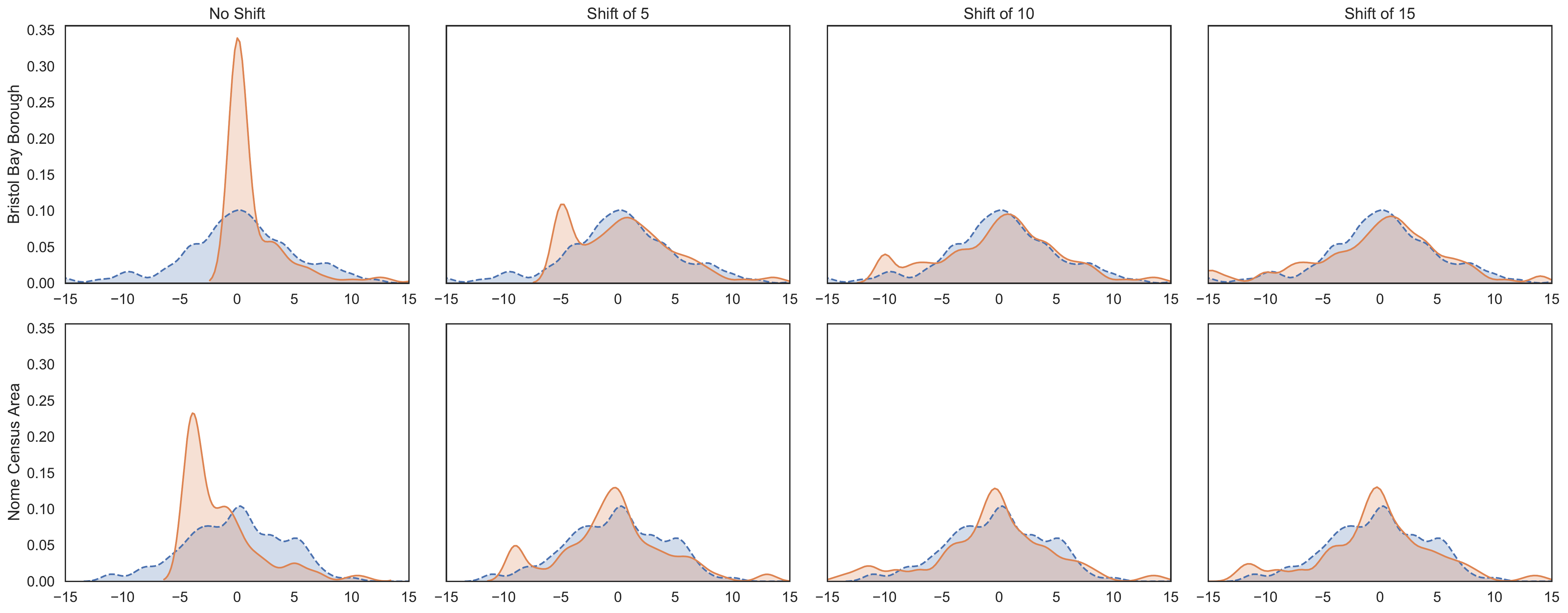}
\caption{Bias of Post-Processing on the Census Problem as $r_m$ Increases.}
\label{fig:biasnn}
\end{figure*}

\begin{lemma}\label{lem:post_is_posi}
    For any noisy data $\noisydata\in \BB{\radius}{\truedata}$, the post-processed solution $\postdata$ of program \eqref{program} is non-negative and 
    equal to solution $\nnpostdata$ of program \eqref{nnprogram}.
\end{lemma}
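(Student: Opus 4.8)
The plan is to reduce the statement to a single claim---that the sign-unconstrained projection $\postdata(\noisydata)$ of \eqref{program} is already non-negative whenever $\noisydata \in \BB{\radius}{\truedata}$---and then obtain the equality $\postdata(\noisydata) = \nnpostdata$ almost for free. Indeed, the feasible region of \eqref{nnprogram} is contained in that of \eqref{program}, since adjoining $\bm{v} \geq \bm{0}$ only shrinks $\cK$. Both programs minimize the same objective $\norm{\bm{v} - \noisydata}_2$, so if the minimizer $\postdata(\noisydata)$ over the larger region happens to satisfy $\postdata(\noisydata) \geq \bm{0}$, then it lies in the smaller region and must minimize the objective there as well; uniqueness of the projection onto the convex feasible set of \eqref{nnprogram} then forces $\nnpostdata = \postdata(\noisydata)$. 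All the real work is therefore in proving non-negativity.

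To prove non-negativity I would first write the projection onto the affine subspace $\{A\bm{v} = \bm{b}\}$ as an affine map that fixes the true data. Since $\truedata$ is feasible, $\cK = \truedata + \ker A$, and hence $\postdata(\noisydata) = \truedata + \Pi(\noisydata - \truedata)$, where $\Pi$ denotes the orthogonal projection matrix onto $\ker A$. Setting $\noise = \noisydata - \truedata$, the $i$-th coordinate of the solution is $\postdata(\noisydata)_i = x_i + (\Pi \noise)_i$, so it suffices to establish $(\Pi\noise)_i \geq -x_i$ for every $i$. I would bound this by writing $(\Pi\noise)_i = \inner{\Pi \unit_i}{\noise}$ (using symmetry of $\Pi$) and applying H\"older's inequality, $|(\Pi\noise)_i| \leq \norm{\Pi\unit_i}_\infty \, \norm{\noise}_1$. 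Non-expansiveness of the orthogonal projection then gives $\norm{\Pi\unit_i}_\infty \leq \norm{\Pi\unit_i}_2 \leq \norm{\unit_i}_2 = 1$, so that $|(\Pi\noise)_i| \leq \norm{\noise}_1$. Because $\noisydata \in \BB{\radius}{\truedata}$ means $\norm{\noise}_1 \leq \radius = \min_j x_j \leq x_i$, we conclude $(\Pi\noise)_i \geq -\norm{\noise}_1 \geq -x_i$, i.e. $\postdata(\noisydata)_i \geq 0$ for each $i$.

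The step that requires the most care is this coordinate-wise bound, and in particular the pairing of norms: the $l_1$-ball hypothesis is exactly what makes H\"older's inequality produce the factor $\norm{\noise}_1$, while the $l_\infty$ norm of the column $\Pi\unit_i$ is controlled only through the $l_2$ non-expansiveness of the orthogonal projection. One should also define $\Pi$ as the orthogonal projection onto $\ker A$ directly, rather than through an explicit formula such as $A^\top(AA^\top)^{-1}A$, so that the argument does not silently assume that $A$ has full row rank. Finally, the radius $\radius = \min_i x_i$ is calibrated precisely so that the per-coordinate inequality $\norm{\noise}_1 \leq \radius \leq x_i$ closes; there is no slack, which indicates that this is the tight threshold guaranteeing non-negativity and hence the collapse of \eqref{nnprogram} onto \eqref{program}.
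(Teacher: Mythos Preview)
Your proposal is correct and follows essentially the same approach as the paper: both arguments establish $\norm{\postdata(\noisydata)-\truedata}_\infty \leq r_m$ using the non-expansiveness of the orthogonal projection together with the standard norm inequalities $\norm{\cdot}_\infty \leq \norm{\cdot}_2 \leq \norm{\cdot}_1$, and then conclude non-negativity coordinate-wise from $r_m = \min_j x_j$. The only cosmetic difference is that the paper chains these inequalities directly as $\norm{\postdata-\truedata}_\infty \leq \norm{\postdata-\truedata}_2 \leq \norm{\noisydata-\truedata}_2 \leq \norm{\noisydata-\truedata}_1 \leq r_m$, whereas you reach the same bound componentwise via H\"older's inequality and non-expansiveness applied to $\Pi\unit_i$ rather than to $\noise$; the substance is identical.
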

\begin{proof}
    Since $\noisydata$ belongs to the $l_1$-ball $\BB{\radius}{\truedata}$,  $\norm{\noisydata-\truedata}_2$ is also bounded from the above by $\radius$ since
    \begin{equation*}
        \norm{\noisydata-\truedata}_2\leq \norm{\noisydata-\truedata}_1\leq \radius\,.
    \end{equation*}
    By convexity of $\region$, $\norm{\postdata-\truedata}_{\infty}\leq \norm{\postdata-\truedata}_2\leq \radius$. Moreover, $\postdata$ is non-negative since its $l_\infty$-distance to $\truedata$ is bounded by $\radius$ and the result follows by optimality of $\nnpostdata$.
\end{proof}


\noindent The next theorem is the main result of this section and it bounds the bias of program \eqref{nnprogram}.

\begin{theorem}
\label{thm:biasn}
    Suppose that the noisy data $\noisydata$ is the output of the Laplace mechanism with scale $\lambda$.
    The bias of the post-processed solution $\nnpostdata$ of  program \eqref{nnprogram} is bounded, in
    $l_{\infty}$ norm, by
    \begin{align*}
        &\norm{\bias{\nnpostdata(\noisydata)}}_{\infty}=\norm{\EE{\noisydata}{\err{\nnpostdata(\noisydata)}}}_{\infty}\\
        \leq &~\bound\cdot \exp\left(\frac{-\radius}{\lambda}\right)\cdot \sum_{i=0}^{n-1}\frac{(\radius)^i}{i!\cdot\lambda^i}\,,
    \end{align*}
    where $\bound$ represents the value $\sup_{\bm{v}\in \nnregion}\norm{\bm{v}-\truedata}_\infty$,
    which is finite due to the boundedness of the feasible region $\nnregion$.
\end{theorem}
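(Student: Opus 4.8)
The plan is to decompose the bias integral across the $l_1$-ball $\BB{\radius}{\truedata}$, show that the contribution from inside the ball cancels exactly by symmetry, and control the contribution from outside by a tail probability. Writing the bias as
\begin{equation*}
\bias{\nnpostdata(\noisydata)} = \int_{\BB{\radius}{\truedata}} \err{\nnpostdata(\bm{y})}\, f_{\noisydata}(\bm{y})\, d\bm{y} + \int_{\RR^n\setminus \BB{\radius}{\truedata}} \err{\nnpostdata(\bm{y})}\, f_{\noisydata}(\bm{y})\, d\bm{y},
\end{equation*}
I would treat the two pieces separately.

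For the inner integral I would argue that it vanishes identically. The key structural facts are that the $l_1$-ball $\BB{\radius}{\truedata}$ is closed under the reflection $\refopt_{\truedata}$ (since $\refopt_{\truedata}(\bm{y})-\truedata = -(\bm{y}-\truedata)$ preserves the $l_1$-norm), that $f_{\noisydata}$ is symmetric about $\truedata$, and that Lemma \ref{lem:post_is_posi} gives $\nnpostdata(\bm{y})=\postdata(\bm{y})$ for every $\bm{y}\in\BB{\radius}{\truedata}$. Mimicking the pairing step of Theorem \ref{thm:no_bias}, I would rewrite the inner integrand as the average of $\err{\nnpostdata(\bm{y})}$ and $\err{\nnpostdata(\refopt_{\truedata}(\bm{y}))}$; since both $\bm{y}$ and its reflection lie in the ball, each error equals the corresponding error of $\postdata$, and Corollary \ref{cor:err_sum_0} forces their sum to $\bm{0}$.

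For the outer integral I would exploit the boundedness of the feasible region. Because $\nnpostdata(\bm{y})\in\nnregion$ for every $\bm{y}$, the definition of $\bound$ gives $\norm{\err{\nnpostdata(\bm{y})}}_\infty = \norm{\nnpostdata(\bm{y})-\truedata}_\infty \le \bound$. Pulling this bound out of the integral (via the triangle inequality for integrals) leaves the estimate $\bound\cdot \pr{\noisydata\notin \BB{\radius}{\truedata}}$. Corollary \ref{col:prob_bound} then evaluates this tail probability to be exactly $\exp(-\radius/\lambda)\sum_{i=0}^{n-1} \radius^i/(i!\,\lambda^i)$, which yields the claimed bound.

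The main obstacle is the inner integral: the cancellation from Corollary \ref{cor:err_sum_0} is a property of program \eqref{program}, not of \eqref{nnprogram}, so it is essential both that the two projections coincide on $\BB{\radius}{\truedata}$ (Lemma \ref{lem:post_is_posi}) and that the ball is reflection-symmetric, so that the reflected point $\refopt_{\truedata}(\bm{y})$ stays inside the region where the two programs agree. Once these two structural facts are in place, the remainder is a routine split-and-bound computation.
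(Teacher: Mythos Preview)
Your proposal is correct and follows essentially the same split-and-bound strategy as the paper: decompose the bias over $\BB{\radius}{\truedata}$ and its complement, kill the inner piece by combining Lemma~\ref{lem:post_is_posi} with the reflection argument behind Theorem~\ref{thm:no_bias}, and bound the outer piece by $\bound$ times the tail probability from Corollary~\ref{col:prob_bound}. If anything, your treatment of the inner term is slightly more explicit than the paper's, since you spell out that $\BB{\radius}{\truedata}$ is closed under $\refopt_{\truedata}$---exactly what is needed for the pairing from Corollary~\ref{cor:err_sum_0} to survive the restriction to the ball---whereas the paper simply cites Theorem~\ref{thm:no_bias} directly.
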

\begin{proof}
    \begin{align*}
        &\bias{\nnpostdata(\noisydata)}=\EE{\noisydata}{\err{\nnpostdata(\noisydata)}}\\
        =&\EE{\noisydata}{\err{\nnpostdata(\noisydata)}\mid \noisydata\in \BB{\radius}{\truedata}}\cdot
        \pr{\noisydata\in\BB{\radius}{\truedata}} +\\
        &\EE{\noisydata}{\err{\nnpostdata(\noisydata)}\mid \noisydata\notin \BB{\radius}{\truedata}}\cdot
        \pr{\noisydata\notin\BB{\radius}{\truedata}}\,.
    \end{align*}
    By Lemma \ref{lem:post_is_posi} and Theorem \ref{thm:no_bias}, the left-hand side of the sum is zero. As a result,
    \begin{align}
        &~\norm{\bias{\nnpostdata(\noisydata)}}_{\infty}\nonumber\\
        =&~\norm{\EE{\noisydata}{\err{\nnpostdata(\noisydata)}\mid \noisydata\notin \BB{\radius}{\truedata}}}_\infty\cdot\pr{\noisydata\notin\BB{\radius}{\truedata}}\nonumber\\
        \leq &~\EE{\noisydata}{\norm{\err{\nnpostdata(\noisydata)}}_\infty\mid \noisydata\notin \BB{\radius}{\truedata}}\cdot\pr{\noisydata\notin\BB{\radius}{\truedata}}\nonumber\\
        \leq &~\bound \cdot \pr{\noisydata\notin\BB{\radius}{\truedata}}\label{eq:bias_bound_1}\\
        = &~\bound \cdot \exp\left(\frac{-\radius}{\lambda}\right)\cdot \sum_{i=0}^{n-1}\frac{(\radius)^i}{i!\cdot\lambda^i}\label{eq:bias_bound_2}\,, 
    \end{align}
    where \eqref{eq:bias_bound_1} follows from the feasibility of  $\nnpostdata(\noisydata)$ and 
    \begin{align*}
        &\norm{\err{\nnpostdata(\noisydata)}}_\infty = \norm{\nnpostdata(\noisydata)-\truedata}_\infty\\
        \leq & \sup_{\bm{v}\in \nnregion}\norm{\bm{v}-\truedata}_\infty = \bound,
    \end{align*}
    since the feasible region is bounded by hypothesis. Equation \eqref{eq:bias_bound_2} follows from Corollary \ref{col:prob_bound}.
\end{proof}

\noindent
Figure \ref{fig:biasnn} illustrates Theorem \ref{thm:biasn}. It reports the same residuals as in Figure \ref{fig:bias} but with the true county counts increased by a positive shift factor. This increases the value of $r_m$ and the bias progressively disappears as $r_m$ grows. This observation can give  insights to statistical agencies about what can be reported without introducing significant bias, informing their decisions on the granularity of the data releases.

To complement these results, the theoretical bound is also compared on the post-processing of New Mexico and its counties. The state has a population of 7,289,112, 33 counties, $r_m = 348$, and the experiment uses $\lambda = 5$. The theoretical bound is 0.29, while the empirical bias is 0.06. The results may not be as tight for larger states, since the bound depends on $C'$, the maximum distance between the real data and a point in the feasible space. 

\section{Analysis of Fairness in Projections}

This section provides a detailed analysis of the distribution of the post-processed 
noise for a special case of program \eqref{program} defined as follows:
\begin{equation}
    \begin{aligned}
        \sumpostdata \coloneqq~&\underset{\bm{v}\in \cK}{\arg\min} \norm{\bm{v}-\noisydata}_2 \\
        \sumregion&=\left\{\bm{v}\in\RR^n~\bigg{|}~\sum_{i=1}^n v_i = b\right\},
    \end{aligned}\tag{$P_{\text{S}}$} \label{sumprogram}
\end{equation}
where $b\in\RR$ is a constant. This specific post-processing step \eqref{sumprogram} requires that the components of its output should be summed up to the constant $b$, which makes it broadly applicable. For instance, in the census context, program \eqref{sumprogram}  makes sure that the state populations are compatible with the overall US population, which is viewed as public information. Similar post-processing steps take place at the state level. The section will reveal an interesting connection between the post-processing step and the census model itself. 

The next theorem is a key result: it characterizes the marginal distribution of the post-processed noise  $\sumpostdata-\truedata$, i.e., the distribution of  $\err{\sumpostdata}_i=\sumpostdatai-\truedata_i$ for any $i\in[n]$. It is expressed in terms of the Laplace distribution but again generalizes to other distributions. 
\begin{theorem}\label{lem:err_dist}
    Let $\{\eta_i\}_{i\in[n]}$ be $n$  i.i.d. random variables drawn from a Laplace distribution ${\rm Lap}(\lambda)$.
    The marginal error of the post-processed solution $\sumpostdata$ of  program \eqref{sumprogram} follows the distribution:
    \begin{equation*}
        \err{\sumpostdata}_i=\sumpostdatai-\truedata_i\sim \frac{(n-1)\eta_i-\sum_{j\neq i}\eta_i}{n}\,\;\;\forall~i\in[n],
    \end{equation*}
    with variance
    \begin{align*}
        \var{\err{\sumpostdata}_i}=&\left(1-\frac{1}{n}\right)\cdot \var{{\rm Lap}(\lambda)}\\
        =&~2\lambda^2\left(1-\frac{1}{n}\right)\,,&&\forall~i\in[n].
    \end{align*}
\end{theorem}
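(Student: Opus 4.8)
The plan is to exploit the fact that \eqref{sumprogram} is the Euclidean projection onto a single affine hyperplane, for which a closed-form solution is available; this sidesteps the reflection machinery used for the general program \eqref{program} and reduces the theorem to an elementary computation. First I would write the feasible set as $\sumregion = \{\bm{v}\in\RR^n : \bm{1}^\top \bm{v} = b\}$ with $\bm{1}$ the all-ones vector, and solve the projection by Lagrange multipliers (equivalently, by noting that the optimal correction of $\noisydata$ must be a multiple of the constraint normal $\bm{1}$). This yields the closed form
\[
\sumpostdata = \noisydata - \frac{\bm{1}^\top \noisydata - b}{n}\,\bm{1}.
\]

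Next I would use the feasibility of the true data. Since $\truedata$ satisfies the constraint defining $\sumregion$, we have $\bm{1}^\top \truedata = b$, and writing $\noisydata = \truedata + \noise$ gives $\bm{1}^\top \noisydata - b = \sum_{j=1}^n \eta_j$. Substituting back, the post-processed error becomes
\[
\err{\sumpostdata} = \sumpostdata - \truedata = \noise - \frac{1}{n}\Big(\sum_{j=1}^n \eta_j\Big)\bm{1},
\]
whose $i$-th coordinate is $\eta_i - \frac{1}{n}\sum_{j=1}^n \eta_j = \frac{(n-1)\eta_i - \sum_{j\neq i}\eta_j}{n}$, establishing the claimed marginal distribution (the summand in the statement should read $\eta_j$ rather than $\eta_i$).

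Finally, for the variance I would write $\err{\sumpostdata}_i = \big(1-\frac{1}{n}\big)\eta_i - \frac{1}{n}\sum_{j\neq i}\eta_j$ as a linear combination of the $n$ independent variables $\{\eta_j\}$, so that by independence the variance is $\big((1-\frac{1}{n})^2 + (n-1)\frac{1}{n^2}\big)\cdot\var{{\rm Lap}(\lambda)}$; the coefficient $\frac{(n-1)^2 + (n-1)}{n^2}$ collapses to $\frac{n-1}{n} = 1-\frac{1}{n}$, and since $\var{{\rm Lap}(\lambda)} = 2\lambda^2$ the stated value follows.

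I do not anticipate a serious obstacle. The one point requiring care is justifying the closed-form projection: the single-constraint structure makes the KKT system explicitly solvable, so no convexity argument beyond the existence/uniqueness already noted for \eqref{program} is needed. The remaining care is purely bookkeeping in collapsing the variance coefficient. I would also remark, as the statement anticipates, that the argument uses only independence and the common variance of the coordinates of $\noise$, so it carries over verbatim to any i.i.d.\ symmetric noise admitting a finite variance.
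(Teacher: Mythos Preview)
Your proposal is correct and takes a genuinely different, more elementary route than the paper. The paper's proof sketch works at the level of probability densities: it writes the marginal density $f_{\err{\sumpostdata}_1}(v)$ as an integral over the level set $S_v=\{\bm{u}:\sumpostdata(\bm{u})_1-x_1=v\}$, then performs an orthonormal change of basis---with $\newunit_1=\frac{1}{\sqrt{n}}\bm{1}$ and $\newunit_n$ chosen so that its first coordinate is $\sqrt{(n-1)/n}$---to identify that integral with the density of $\inner{\newunit_n}{\noise}$ evaluated at a rescaled point, which yields the stated linear combination of the $\eta_i$. You instead bypass all density computations by writing down the closed-form hyperplane projection $\sumpostdata=\noisydata-\frac{\bm{1}^\top\noisydata-b}{n}\bm{1}$ and reading off the error as a deterministic linear map of $\noise$; this is shorter, requires no geometric construction, and makes it transparent that only independence and finite second moment of the noise are used (your closing remark). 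The paper's basis-change argument, while heavier here, is the natural tool when one wants to track the full joint density of the projected noise or to handle higher-codimension affine constraints, but for the single-hyperplane case your approach is the cleaner one. Your side observation about the typo ($\eta_j$ versus $\eta_i$ in the sum) is also correct.
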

 \begin{proof} (Sketch)
    Without the loss of generality, the proof considers $\err{\sumpostdata}_1$.
    Let $\{\unit_i\}_{i\in[n]}$ be the standard basis of the $n$-dimensional space $\RR^n$ such that 
    the noise $\noise$ added to the true data $\truedata$ can be represented as $\noise = \sum_{i=1}^n \eta_i\cdot \unit_i$
    where $\{\eta_i\}_{i\in[n]}$ are $n$  i.i.d. random variables drawn from a Laplace distribution $\text{Lap}(\lambda)$ and $\{\unit_i\}_{i\in[n]}$ be the standard basis of $\RR^n$. Consider the probability density of the marginal error $\err{\sumpostdata}_1$ at $v$, i.e., the integration of the original Laplacian noise over a set $S_v$ as follows.
    \begin{align*}
        f_{\err{\sumpostdata}_1}(v)=\int_{\bm{y}\in S_v} \frac{1}{(2\lambda)^n}\exp\left(-\frac{\norm{\bm{y}-\truedata}_1}{\lambda}\right)d\bm{y}\,,
    \end{align*}
    where $S_v = \left\{\bm{u}\mid \sumpostdata(\bm{u})_1-x_1=v\right\}$. To compute this integration, it is easier to exploit the definition of the projection operator and express $S_v$ differently through a basis transformation. Indeed, it can be shown that $S_v$ can be rewritten as 
     Given the new basis $\{\newunit_i\}_{i\in[n]}$, the set $S_v$ can be expressed in the following form
    \begin{equation*}
        S_v=\left\{\bm{u}=\truedata+\sum_{i=1}^n c'_i\cdot \newunit_i~\bigg{|}~ c'_n=v\cdot\sqrt{\frac{n}{n-1}}\right\}
    \end{equation*}
    where $\{\newunit_i\}_{i\in[n]}$ is an orthonormal basis of $\RR^n$ with 
$
        \newunit_1=\frac{1}{\sqrt{n}}[1, \ldots, 1]^T
$
    and $\newunit_n$ has $\sqrt{(n-1)/n}$ as
    its first component and $-\sqrt{1/(n(n-1))}$ as the remaining ones. The marginal probability density distribution $\err{\sumpostdata}_1$ is then given by
    \begin{align*}
        &\int_{\bm{y}'_{-n}\in \RR^{n-1}} f'_{\noise}((\bm{y}'_{-n},v\cdot\sqrt{\frac{n}{n-1}})) d\bm{y}'_{-n} \label{eq:margin_dist_2}
    \end{align*}
    where $\bm{y} =[y_1,\dots,y_n]^\top$, $\bm{y}'_{-n} =[y'_1,\dots,y'_{n-1}]^\top$, and $f'_{\noise}$ represents the Laplace probability density function distribution under the new basis $\{\newunit_i\}_{i\in[n]}$. It comes that the random variable $\eta'_n$, i.e., the noise $\eta_n$ in the new basis, shares the same distribution with
    \begin{equation*}
        \inner{\newunit_n}{\noise}=\sum_{i=1}^n \eta_i\cdot\inner{\newunit_n}{\unit_i}
        =\frac{(n-1)\eta_1-\sum_{i=2}^n\eta_i}{\sqrt{n(n-1)}}\,.
    \end{equation*}
    Since, for any $v\in\RR$, $f_{\err{\sumpostdata}_1}(v)=f_{\eta'_n}\left(v\cdot\sqrt{\frac{n}{n-1}}\right)$, 
    \begin{equation*}
        \err{\sumpostdata}_1\sim\frac{(n-1)\eta_1-\sum_{i=2}^n\eta_i}{n}\,.
    \end{equation*}
    By independence of $\{\eta_i\}_{i\in[n]}$, it follows that 
    \begin{align*}
        \var{\err{\sumpostdata}_1}= 2\lambda^2\left(1-\frac{1}{n}\right).
    \end{align*}
\vspace{-4pt}
\end{proof} 

\begin{figure}[!t]
\centering
\includegraphics[width=0.6\linewidth]{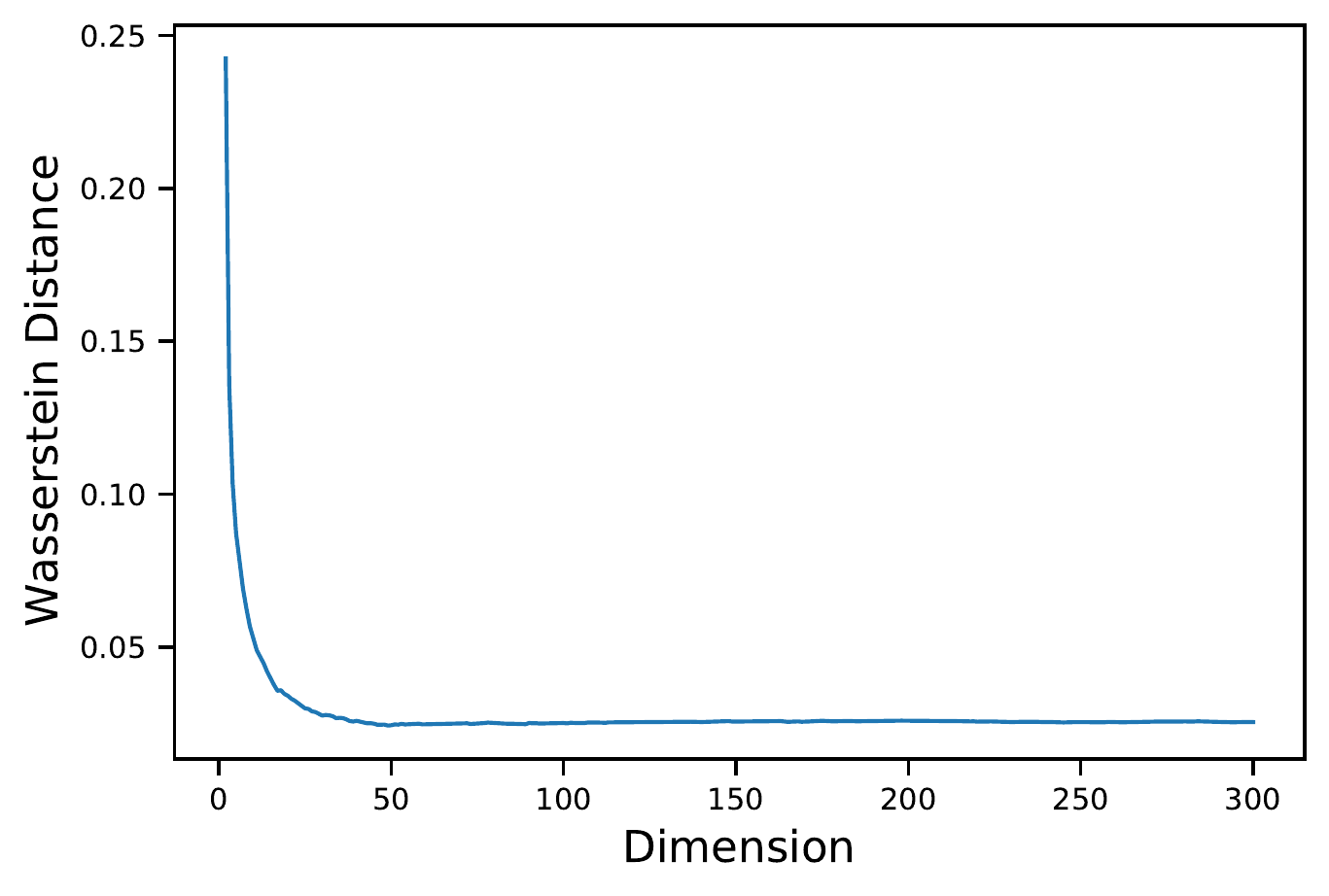}
\caption{Wasserstein Distance between the Laplacian Distribution and the Marginal Error Distribution.}
\label{fig:Wasserstein}
\vspace{-0.5cm}
\end{figure}

\begin{figure*}[!th]
\centering
\includegraphics[width=0.85\linewidth]{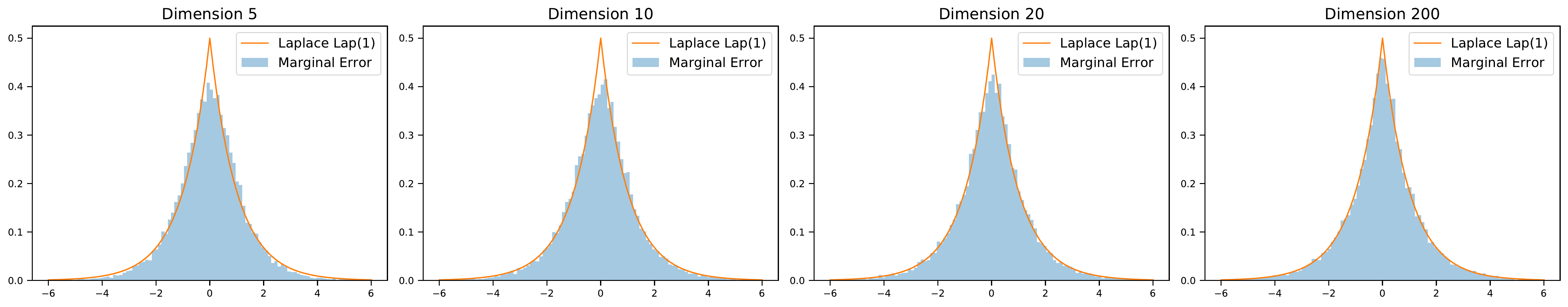}
\caption{Illustrating the Convergence Results of Theorem \ref{thm:conv}.}
\label{fig:convergence}
\end{figure*}

Figure \ref{fig:Wasserstein} highlights Theorem \ref{lem:err_dist}. It shows how the Wasserstein distance between the distributions of the Laplace residuals and the post-processed residuals. As the figure indicates, the Wasserstein distance decreases quickly as the problem dimension increases. 

Theorem \ref{lem:err_dist} also reveals some fundamental insights about post-processing. First, it shows that post-processing reduces the variance of the noise, while preserving differential privacy. In other words, post-processing in this setting does not introduce bias and leverages the public information (i.e., $b$) to reduce the variance. This is again a positive result as reducing the variance may reduce fairness issues when using the data in decision-making processes. Second, it shows that different aggregation sizes (i.e., different values of $n$) may lead to disparate impacts and fairness issues. Indeed, consider two counties $a$ and $b$ with approximately the same sizes which are aggregated differently: $a$ is aggregated with $n_a$ other counties, $b$ is aggregated with $n_b$, with $n_a \gg n_b$, and the aggregated data is public. Then the variance of the post-processed value for $a$ will be substantially larger than the the variance of the post-processed value of $b$, potentially creating situations where counties $a$ and $b$ will be treated fundamentally differently in decision-making processes.
Hence, although post-processing reduces variance, its application should take into account fairness issues. Once again, the key to ensure fairness is to make sure that quantities being released are of the same order of magnitude.

On the census data-release problem, when comparing the solutions returned by program $P_S$ for the states of Arizona---which has 15 counties---and Texas---which has 254 counties---it is found that both the theoretical and empirical difference in their variance to be roughly 6\%. This result highlights the importance of the finding.

The following results show that the marginal error converges in distribution to the Laplace distribution. So, even in the worst case, post-processing does not introduce 

\begin{theorem}\label{thm:lim_var}
    The variance of the resulting marginal error of program \eqref{sumprogram} is increasing in the dimension $n$ and
    converges to that of the marginal Laplacian noise added to the true data $\truedata$, as the dimension $n$ tends to infinity, i.e.,
    \begin{equation*}
        \lim_{n\to\infty}\var{\err{\sumpostdata}}=\var{{\rm Lap}(\lambda)}=2\lambda^2\,.
    \end{equation*}
\end{theorem}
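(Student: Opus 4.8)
The plan is to obtain this statement as a direct corollary of the closed-form variance already computed in Theorem \ref{lem:err_dist}, which established that $\var{\err{\sumpostdata}_i} = 2\lambda^2\bigl(1 - \tfrac{1}{n}\bigr)$ for every coordinate $i \in [n]$. Since all $n$ marginals share this common value, the whole statement reduces to analyzing the scalar sequence $a_n := 2\lambda^2\bigl(1 - \tfrac{1}{n}\bigr)$ as a function of the dimension $n$, and no multivariate reasoning is needed.

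First I would establish the monotonicity claim. The map $n \mapsto -\tfrac{1}{n}$ is strictly increasing on the positive integers, and multiplying by the positive constant $2\lambda^2$ and adding $2\lambda^2$ preserves this, so $a_n$ is strictly increasing in $n$. Second, I would take the limit: since $\tfrac{1}{n} \to 0$ as $n \to \infty$, continuity of the affine map $t \mapsto 2\lambda^2(1 - t)$ yields $\lim_{n\to\infty} a_n = 2\lambda^2(1 - 0) = 2\lambda^2$. Finally, recalling from Theorem \ref{lem:err_dist} that the Laplace distribution ${\rm Lap}(\lambda)$ has variance $2\lambda^2$, I identify the limit with $\var{{\rm Lap}(\lambda)}$, which completes the argument.

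There is no substantive obstacle here. All of the analytic work---representing the marginal error as $\tfrac{(n-1)\eta_i - \sum_{j\neq i}\eta_j}{n}$ and computing its variance using the independence of the $\eta_i$---was carried out in Theorem \ref{lem:err_dist}. The present theorem only records the two elementary facts about the sequence $1 - \tfrac{1}{n}$ (strict monotone increase and convergence to $1$) and interprets them in terms of the noise variance, so the proof is short and purely computational.
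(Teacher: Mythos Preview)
Your proposal is correct and matches the paper's treatment: the paper states this theorem without a separate proof, treating it as an immediate consequence of the closed-form variance $2\lambda^2\bigl(1-\tfrac{1}{n}\bigr)$ derived in Theorem~\ref{lem:err_dist}. Your write-up simply makes explicit the two elementary observations about the sequence $1-\tfrac{1}{n}$ that the paper leaves implicit.
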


\begin{theorem}
\label{thm:conv}
    As the dimension $n$ goes to infinity, the marginal error of  program \eqref{sumprogram} converges in distribution to 
    the marginal Laplacian noise , i.e.,
    \begin{equation*}
        \err{\sumpostdata}~\xrightarrow{~~d~~}~{\rm Lap}(\lambda)\,,\qquad {\rm as~~} n\to\infty\,.
    \end{equation*}
\end{theorem}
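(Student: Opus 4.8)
The plan is to start from the exact distributional identity already established in Theorem~\ref{lem:err_dist}, namely that for each coordinate $i\in[n]$,
\[
    \err{\sumpostdata}_i \sim \frac{(n-1)\eta_i - \sum_{j\neq i}\eta_j}{n},
\]
where $\{\eta_j\}_{j\in[n]}$ are i.i.d.\ $\mathrm{Lap}(\lambda)$. The first observation I would make is that the right-hand side collapses to a particularly transparent form: writing $(n-1)\eta_i = n\eta_i - \eta_i$ gives
\[
    \frac{(n-1)\eta_i - \sum_{j\neq i}\eta_j}{n} = \eta_i - \frac{1}{n}\sum_{j=1}^n \eta_j = \eta_i - \bar{\eta}_n,
\]
so the marginal post-processed error is distributed exactly as the deviation of a single Laplace sample $\eta_i$ from the empirical mean $\bar{\eta}_n$ of all $n$ noise terms. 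Because this distribution is identical for every $i$ by symmetry, it suffices to analyze $\eta_i - \bar{\eta}_n$ for a fixed $i$.

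With this reformulation the convergence is a direct consequence of the law of large numbers together with Slutsky's theorem. First I would invoke the weak law of large numbers: since the $\eta_j$ are i.i.d.\ with mean $\mathbb{E}[\mathrm{Lap}(\lambda)]=0$ and finite variance $2\lambda^2$, the empirical mean satisfies $\bar{\eta}_n \xrightarrow{~p~} 0$ (the $L^2$/Chebyshev version is enough, since $\var{\bar{\eta}_n} = 2\lambda^2/n \to 0$). Second, the ``target'' term $\eta_i$ is, for every $n$, exactly $\mathrm{Lap}(\lambda)$-distributed, so it is a fixed limit requiring no convergence argument. Slutsky's theorem then combines these two facts: a sequence that is constant in distribution ($\eta_i$) minus a sequence converging in probability to a deterministic constant ($\bar{\eta}_n \to 0$) converges in distribution to the difference, yielding $\eta_i - \bar{\eta}_n \xrightarrow{~d~} \eta_i \sim \mathrm{Lap}(\lambda)$, which is precisely the claim.

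The one place to be careful --- the main, and essentially the only, obstacle --- is that $\eta_i$ and $\bar{\eta}_n$ are \emph{not} independent, since $\eta_i$ itself appears inside $\bar{\eta}_n$. This is exactly why I would route the argument through Slutsky rather than, say, multiplying characteristic functions of independent summands: Slutsky's theorem tolerates arbitrary dependence between the two sequences provided one of them converges in probability to a constant, which holds here. As an independent cross-check I would also verify the result via characteristic functions, using $\phi_{\mathrm{Lap}(\lambda)}(t) = (1+\lambda^2 t^2)^{-1}$ and the independence of the $\eta_j$ to factor $\phi_{\err{\sumpostdata}_i}(t) = \phi_{\mathrm{Lap}(\lambda)}\!\left(\tfrac{(n-1)t}{n}\right)\,\phi_{\mathrm{Lap}(\lambda)}\!\left(\tfrac{-t}{n}\right)^{n-1}$, then showing the first factor tends to $(1+\lambda^2 t^2)^{-1}$ while the $(n-1)$-fold product tends to $1$ (its logarithm behaves like $-(n-1)\lambda^2 t^2/n^2 \to 0$). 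Lévy's continuity theorem then delivers the same conclusion. The Slutsky argument is shorter, so I would present it as the main line and relegate the characteristic-function computation to a confirmatory remark.
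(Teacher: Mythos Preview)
Your proof is correct and follows the same overall strategy as the paper: invoke the distributional identity of Theorem~\ref{lem:err_dist}, then combine a law-of-large-numbers step with Slutsky's theorem. The only difference is the decomposition. The paper splits the error as $\tfrac{n-1}{n}\eta_1 - \tfrac{1}{n}\sum_{i\geq 2}\eta_i$, keeping the two summands \emph{independent}; it then verifies via the explicit CDF that $\tfrac{n-1}{n}\eta_1 \xrightarrow{d} \mathrm{Lap}(\lambda)$ and uses the WLLN (plus a squeeze from $\tfrac{1}{n-1}$ to $\tfrac{1}{n}$) on the second term before invoking Slutsky. Your rewrite $\eta_i - \bar{\eta}_n$ is algebraically equivalent and slightly slicker, since the first term is already exactly $\mathrm{Lap}(\lambda)$ for every $n$ and no CDF computation is needed; the price is the dependence between $\eta_i$ and $\bar{\eta}_n$, which you correctly note is harmless for Slutsky because the second sequence converges in probability to a constant. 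Your characteristic-function cross-check is a sound alternative route that the paper does not pursue.
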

\begin{proof}
    By Lemma \ref{lem:err_dist}, the marginal error $\err{\sumpostdata}$ follows
    \begin{equation*}
        \frac{(n-1)\eta_1-\sum_{i=2}^n\eta_i}{n}\,,
    \end{equation*}
    where $\{\eta_i\}_{i\in[n]}$ are the $n$  i.i.d. random variables drawn from a Laplace distribution $\text{Lap}(\lambda)$.
    Let $\eta$ be a Laplacian random variable $\text{Lap}(\lambda)$: Its cumulative distribution function is
    \begin{equation*}
        \pr{\eta\leq v}=\begin{cases}
            \frac{1}{2}\exp\left(\frac{v}{\lambda}\right), & v\leq 0,\\
            1-\frac{1}{2}\exp\left(\frac{-v}{\lambda}\right), & v>0.
        \end{cases}
    \end{equation*}
    The cumulative distribution function of $(n-1)\eta_1/n$ is 
    \begin{equation*}
        \pr{\frac{n-1}{n}\eta_1\leq v}=\begin{cases}
            \frac{1}{2}\exp\left(\frac{nv}{(n-1)\lambda}\right), & v\leq 0,\\
            1-\frac{1}{2}\exp\left(\frac{-nv}{(n-1)\lambda}\right), & v>0.
        \end{cases}
    \end{equation*}
    Note that, for any $v\in\RR$,
    \begin{equation*}
        \lim_{n\to\infty}\pr{\frac{n-1}{n}\eta_1\leq v}=\pr{\eta\leq v}\,,
    \end{equation*}
    which implies that the random variable $(n-1)\eta_1/n$ converges to $\eta$ in distribution. By the Weak Law of Large Numbers, the sample mean among $\{\eta_i\}_{i\in\{2,\dots, n\}}$ converges in probability to 
    their common expectation $0$, i.e., for $\xi>0$,
    \begin{equation*}
        \lim_{n\to\infty}\pr{\left\vert\frac{\sum_{i=2}^n \eta_i}{n-1}\right\vert\geq \xi} = 0\,.
    \end{equation*}
    Additionally, for any $\xi>0$ and $n\geq 2$,
    \begin{equation*}
        \left\vert\frac{\sum_{i=2}^n \eta_i}{n}\right\vert\geq \xi\implies\left\vert\frac{\sum_{i=2}^n \eta_i}{n-1}\right\vert\geq \xi\,.
    \end{equation*}
    which implies that
    \begin{equation*}
        \pr{\left\vert\frac{\sum_{i=2}^n \eta_i}{n}\right\vert\geq \xi}\leq \pr{\left\vert\frac{\sum_{i=2}^n \eta_i}{n-1}\right\vert\geq \xi}\,.
    \end{equation*}
    By the squeeze theorem, the random variable $\sum_{i=2}^n \eta_i/n$ converges to 0 in probability, as the dimension
    $n$ goes to infinity. Since 
\begin{equation*}
    \frac{n-1}{n}\eta_1~\xrightarrow{~~d~~}~\eta\,,\qquad \frac{\sum_{i=2}^n \eta_i}{n}~\xrightarrow{~~p~~}~0,
\end{equation*}
by Slutsky's Theorem \cite{billingsley2013convergence}, it follows that 
\begin{equation*}
    \frac{(n-1)\eta_1-\sum_{i=2}^n\eta_i}{n}~\xrightarrow{~~d~~}~\eta\,\qquad {\rm as~~} n\to\infty.
\end{equation*}
\end{proof}

\noindent
Figure \ref{fig:convergence} illustrates Theorem \ref{thm:conv}. It depicts the convergence to the Laplace distribution as the dimension increases. It also shows how the variance decreases. Finally, it is also interesting to report some experimental results on census data and, in particular, the states of Arizona (population of 2,371,715 and 15 counties) and Texas (population of 8,887,839 and 254 counties). For $\lambda = 10$, the distribution variances are 186.67 and 199.21 for Arizona and Texas respectively. Over 80,000 experiments, the empirical variances were 186.88 and 199.32 respectively. These results clearly highly the influence of the problem dimension (i.e., the number of counties) on the variance. 

\section{Conclusion}

This paper was motivated by the recognition that the effect of post-processing
on the noise distribution is poorly understood: It took a first step towards understanding the theoretical and empirical properties of post-processing. Motivated by census applications, it studied the behavior of  \emph{projections} for domains where the feasibility space is specified by linear equations. The paper showed that, when non-negativity constraints are absent, the projection does not introduce bias. With non-negativity constraints, the paper presented an upper bound on the bias, providing insights on the type of problems for which the bias will be significant. 
The paper also provided a detailed analysis of the important sub-problem with one linear equation arising in hierarchical data-release problems. It fully characterized the residual distribution of the post-processed noise, showing that it converges towards the selected noise distribution when the dimension of the feasible space increases. This last result shed an interesting light on the effect of post-processing on the variance of the post-processed data. Indeed, in this case, post-processing reduces the variance
by exploiting the public information available. These results may have strong implications with respect to group fairness and should inform statistical agencies about the trade-off between the granularity of the released data, the bias, and the variance.





\bibliographystyle{plain}
\bibliography{aaai21.bib}

\end{document}